\newcommand{\DS}{\displaystyle}
\newcommand{\reals}{\mathbb R} 
\def\bx{\boldsymbol{x}} \def\by{\boldsymbol{y}} 
\def\bX{\boldsymbol{X}}  
\def\bx{\boldsymbol{x}} \def\bv{\boldsymbol{v}} 
\def\be{\boldsymbol{e}}
\def\bV{\boldsymbol{V}} \def\bW{\boldsymbol{W}} 
 \def\bQ{\boldsymbol{Q}} 
\def\bW{\boldsymbol{W}}
\def\bP{\boldsymbol{P}} 
\def\bQ{\boldsymbol{Q}} 
\def\bb{\boldsymbol{b}}
\def\bSigma{\boldsymbol{\Sigma}}
\def\bomega{\boldsymbol{\omega}}
\def\bLambda{\boldsymbol{\Sigma}} \def\bLambda{\boldsymbol{\Lambda}}
 \def\bI{\boldsymbol{I}} 
\def\cX{\mathcal{X}}  
  \def\cL{\mathcal{L}}
\def\cS{\mathcal{S}}  
  \def\cN{\mathcal{N}}
\def\cY{\mathcal{Y}}  \def\0{\boldsymbol{0}}
\def\tbXi{\widetilde{\bX_{\mathrm{in}}}}
\def\sbxi{\widetilde{\bX_{\mathrm{in}}}}
\def\ni{N_{\mathrm{in}}}
\def\no{N_{\mathrm{out}}}
\def\si{\sigma_{\mathrm{in}}}
\def\so{\sigma_{\mathrm{out}}}
\DeclareMathOperator{\dist}{\mathrm{dist}}
\DeclareMathOperator{\argmin}{\mathrm{argmin}}
\DeclareMathOperator{\argmax}{\mathrm{argmax}}
\DeclareMathOperator{\Tr}{\mathrm{Tr}} \DeclareMathOperator{\Sp}{\mathrm{Sp}}
\DeclareMathOperator{\poly}{\mathrm{poly}}
\DeclareMathOperator{\diag}{\mathrm{diag}}
\DeclareMathOperator{\rank}{\mathrm{rank}}
\DeclareMathOperator{\SNR}{\mathrm{SNR}}
\DeclareMathOperator{\di}{\mathrm{d}}
\newtheorem{theorem}{Theorem}
\newtheorem{example}{Example}
\newtheorem{proposition}{Proposition}
\newtheorem{lemma}{Lemma}
\newtheorem{corollary}{Corollary}
\newtheorem{definition}{Definition}
\newtheorem{assumption}{Assumption}
\title{Robust Subspace Recovery with Adversarial Outliers}
\author{Tyler Maunu \and Gilad Lerman}
\begin{document}

\maketitle

\begin{abstract}
We study the problem of robust subspace recovery (RSR) in the presence of adversarial outliers. That is, we seek a subspace that contains a large portion of a dataset when some fraction of the data points are arbitrarily corrupted. We first examine a theoretical estimator that is intractable to calculate and use it to derive information-theoretic bounds of exact recovery. We then propose two tractable estimators: a variant of RANSAC and a simple relaxation of the theoretical estimator. The two estimators are fast to compute and achieve state-of-the-art theoretical performance in a noiseless RSR setting with adversarial outliers. The former estimator achieves better theoretical guarantees in the noiseless case, while the latter estimator is robust to small noise, and its guarantees significantly improve with non-adversarial models of outliers. We give a complete comparison of guarantees for the adversarial RSR problem, as well as a short discussion on the estimation of affine subspaces.
\end{abstract}



\section{Introduction}

In this paper, we explore some theoretical foundations for the problem of robust subspace recovery (RSR). The mathematical formulation of this problem assumes inliers lying on or around a low-dimensional subspace and outliers lying away from this subspace. The aim is to estimate this subspace and consequently provide a sort of robust variant of the principal component analysis (PCA) subspace. A question present in many past works has been ``what percentage of outliers can my robust subspace estimator tolerate before it fails?''~\cite{xu2012robust,lp_recovery_part1_11,hardt2013algorithms,zhang2014novel,lerman2015robust,zhang2016robust,lerman2017fast,cherapanamjeri2017thresholding,arias2017ransac,maunu2019well}.  This fundamental question is inspired, at least in part, by the study of breakdown points in robust statistics~\cite{donoho1983notion, huber_book,maronna2006robust}, which are defined as the fraction of arbitrary outliers an estimator can tolerate before it fails. We will discuss below what constitutes failure of an estimator. Therefore, the study of breakdown points can be thought of as the study of robustness to \emph{adversarial} outliers because the breakdown point must cover worst-case scenarios. Nevertheless, previous works studying breakdown points in RSR either assumed special non-adversarial settings of outliers or relatively small fractions of adversarial outliers. A more comprehensive study of the problem is missing. This work gives a rigorous and comprehensive study of breakdown points in RSR. 

The above notion of a breakdown point in robust statistics can be more rigorously defined as follows. Suppose that an adversary is allowed to arbitrarily change an $\epsilon$ fraction of the data points. The breakdown point is then the fraction of outliers at which this adversary can make the estimator arbitrarily far from the truth~\cite{Donoho1992,huber_book,maronna2006robust}. In Section  \ref{subsec:background} we provide the typical mathematical definition and explain why it does not apply to RSR.

Nonetheless, for the RSR problem with inliers exactly on the underlying subspace, one alternative notion of a breakdown point is the largest fraction of outliers where perfect estimation of the underlying subspace is still possible~\cite{zhang2014novel}. To be consistent with past literature, we will instead look at lower bounds for the ratio of the number of inliers to the number of outliers. This ratio is referred to as the signal-to noise-ratio (SNR). There is really no distinction between bounding the percentage of outliers from above and bounding the SNR from below because these quantities are in one-to-one correspondence with each other.

Before we go any further, we clarify the RSR problem formulation and introduce some common notation and terminology. The input dataset $\cX$ for the RSR problem is assumed to be an inlier-outlier dataset. That is, it can be partitioned as $\cX = \cX_{\mathrm{in}} \cup \cX_{\mathrm{out}}$, where $\cX_{\mathrm{in}}$ contains inliers lying on or close to a low-dimensional subspace and $\cX_{\mathrm{out}}$ contains outliers distributed in the ambient space. The desired output is the underlying subspace. We denote the number of points in $\cX$ by $N$, the ambient dimension of $\cX$ by $D$ and the dimension of the underlying inlier subspace by $d$. The RSR problem is well-defined if certain conditions are satisfied by the inliers and outliers. Since we wish to address adversarial outliers, we impose no restriction on the outliers, except for an upper bound of their fraction, or equivalently, a lower bound on the SNR. Generally speaking, the inliers need to permeate, or spread throughout, the underlying subspace. In particular, they should not concentrate on its strictly lower-dimensional subspaces.

Different quantitative notions of permeance of inliers are reviewed by~\cite{lerman2018overview}, and the ones relevant to our analysis are described in Section  \ref{sec:identhard}. The RSR formulation with adversarial outliers fits into the $\epsilon$-corruption model specified in previous works~\cite{diakonikolas2018robustly,steinhardt2018resilience} when $| \cX_{\mathrm{out}}| = \epsilon |\cX|$ and the outliers are distributed adversarially. The case where inliers lie exactly on the underlying subspace will be referred to as the noiseless RSR setting, and the case when inliers lie close to this subspace will be referred to as the noisy RSR setting. This work is primarily concerned with computational limits in the noiseless RSR setting. To supplement this, there is a brief discussion on the case of very small noise, whereas the general noisy RSR setting is still generally open and left for future work. As we clarify later, even the problem of noiseless RSR is computationally hard when considering worst-case scenarios. It is thus interesting to guarantee the performance of fast algorithms for sufficiently general, though still special, settings of this problem.

Past works on RSR with adversarial outliers achieve an $O(d)$ lower bound on the SNR for exact recovery~\cite{xu2012robust,cherapanamjeri2017thresholding}. The varying constants in these works depend on how well inliers permeate the underlying subspace, as well as built in deficiencies of the methods pursued. Indeed, the constants are usually quite poor:~ for example, \cite{xu2012robust} achieves an SNR of $121 \mu d/9$ at best in the noiseless case, where $\mu$ is an inlier incoherence parameter. It is unclear how $\mu$ scales in general, and it is known that it can scale like $\log(N)$ in benign cases. The algorithm in \cite{cherapanamjeri2017thresholding} achieves the even worse bound of $128\mu^2 d - 1 $. We will show that it is possible to achieve bounds much closer to $d$ with practical algorithms.

We first quantify the information-theoretic lower bound on the SNR for exact recovery by considering an intractable theoretical estimator. In the case of adversarial outliers and well-distributed inliers this bound is $O(1)$. Higher bounds are provided for more general cases of inliers. Despite the $O(1)$ information-theoretic bound, an $O(d)$ computational-hardness bound is expected for a polynomial-time algorithm when considering a worst-case scenario. After clarifying these issues, we propose two algorithms that obtain the $O(d)$ bound with a relatively small constant. One of them relaxes the intractable theoretical estimator and obtains both state-of-the-art speed and robustness for adversarial outliers and permeating inliers. The other one is a RANSAC-type estimator, which obtains an even lower SNR bound and comparable speed for adversarial outliers and permeating inliers. Its minimal SNR for exact recovery can be $cd$, where $c$ is at least $\Omega(1/\poly\log(d))$. However, unlike the relaxed estimator, the RANSAC-type estimator can be sensitive to small noise, and the RANSAC SNR bound does not improve with non-adversarial outliers.

In the following subsections, we will set-up and outline the contributions of this work.  First, Section  \ref{subsec:background} will give necessary background to understand the contributions of this work. Then, Section  \ref{subsec:contributions} will detail the primary contributions of this work and give an overview of what is covered. Finally, Section  \ref{subsec:notation} will define the notation used in this paper.

\subsection{Necessary Background}
\label{subsec:background}

The RSR problem requires estimation of a low-dimensional subspace with a fixed dimension $d$. For simplicity, we will assume that the underlying $d$-dimensional subspace is linear and refer to such a subspace as a $d$-subspace. Consideration of affine subspaces is left to Section  \ref{sec:centering}. The estimation problem takes place on the set of $d$-subspaces, which is called the Grassmannian manifold. When we write the Grassmannian as $G(D,d)$, we mean the set of $d$-subspaces in $\reals^D$ (where both $d$ and $D$ are fixed).

The theoretical estimator we consider in this paper, and refer to as the $\ell_0$-estimator, is given by a solution to
\begin{equation}\label{eq:l0}
\argmax_{L \in G(D,d)} \left| \cX \cap L \right|.
\end{equation}
This estimator is labelled as an ``$\ell_0$-estimator" because one can equivalently formulate it by
\begin{equation}\label{eq:l0min}
	\argmin_{L \in G(D,d)} \left\|[\dist(\bx_1, L), \dots, \dist(\bx_N, L)]^\top \right\|_0,
\end{equation}
where $\dist(\bx_i, L) = \|\bx_i - \bP_L \bx_i\|_2$ is the Euclidean distance between $\bx_i$ and its orthogonal projection onto $L$, and $\|\cdot\|_0$ denotes the number of nonzero elements in a vector ($\ell_0$-norm).
This estimator, which was introduced in~\cite{lp_recovery_part1_11}, tries to find the $d$-subspace that contains as many points as possible, or equivalently, whose complement contains as few points as possible. This $d$-subspace is also called the \emph{most significant subspace}\footnote{This is in reference to the fact that the dataset could lie within the union of many $d$-subspaces. In this case, the $d$-subspace that contains the most points is more significant than the others.}.
In theory, it is a natural estimator for the underlying subspace in the noiseless RSR setting, and we thus find it useful to examine its information-theoretic limits. In practice, however, it is intractable to calculate. There is a definitive gap between the information-theoretic limits of this estimator and what is achievable by polynomial-time algorithms.

The mathematical definition of a breakdown point is as follows. Suppose we have a set of random points, $y_1, \dots, y_n$, in a metric space $(M,d)$. The breakdown point of an estimator $\hat \alpha(y_1, \dots, y_n)$ for a true parameter $\alpha^*$ is the infimum of the fraction $\epsilon$ at which, for $y_1, \dots, y_{\epsilon n}$ adversarially modified,
$$
    \sup_{y_1, \dots, y_{\epsilon n} \in M} d( \hat \alpha(y_1, \dots, y_n),  \alpha^* ) = \infty.
$$
Extensive works reviewing this and other topics in robust statistics have been written by~\cite{huber_book} and \cite{maronna2006robust}.

Notice that this notion does not immediately extend to estimation on the Grassmannian manifold, since the Grassmannian is compact and common metrics on it are bounded.
We instead work with the following notion of RSR breakdown point. Leaving more technical discussion for later, assume that we have a metric on $G(D,d)$, given by $g: G(D,d) \times G(D,d) \to [0, C]$ (the choice of metric in our work yields $C=\pi/2$). Suppose that we now have an inlier-outlier dataset $\cX$ with $| \cX_{\mathrm{out}} | = \epsilon \cdot |\cX|$, for some $\epsilon$ that is not fixed for this discussion. In this case, we define the RSR breakdown point of an estimator $\hat L$ with respect to the inliers $\cX_{\mathrm{in}}$ on $G(D,d)$ to be the infimum of the SNR (which varies with $\epsilon$) where
\begin{equation}\label{eq:breakdown}
    \sup_{\cX_{\mathrm{out}} \subset \reals^D \setminus L_*} g(\hat L, L_*) > 0.
\end{equation}
This concept of RSR breakdown point was first discussed in \cite{hardt2013algorithms}, though the authors considered the supremum of the fraction of outliers instead of the infimum of the SNR.
This notion appeared before, though not named as a breakdown point, in \cite{xu2012robust} and the earlier preprints of \cite{lp_recovery_part1_11,zhang2014novel,lerman2015robust}. A different, though unconvincing, notion of a breakdown point in RSR appeared in \cite{xu2013outlier}.
Rather than being concerned with a breakdown point at which things are arbitrarily bad, the RSR breakdown point specifies when exact recovery is no longer possible. Exploration of other notions of breakdown points in subspace estimation is an interesting avenue for further research.

In the adversarial setting, where no restrictions are imposed on the outliers, the breakdown point must depend on properties of the inlier dataset, $\cX_{\mathrm{in}}$. As we will see later, different breakdown points are achievable based on what assumptions one makes about $\cX_{\mathrm{in}}$.

\subsection{Contributions of This Work}
\label{subsec:contributions}

The contributions of this paper are summarized in the following points.
\begin{enumerate}
    \item We derive information-theoretic breakdown points for RSR by considering the $\ell_0$-estimator.
    \item We present a RANSAC algorithm that attempts to find the $\ell_0$-estimator. We derive its competitive breakdown point for sufficiently well-distributed inliers. This algorithm has the benefit of less dependence on inlier statistics but may suffer more in noisy settings, and its breakdown points do not seem to improve with simpler models of outliers.
    \item We give a method that achieves a state-of-the-art RSR breakdown point in certain inlier regimes (when they permeate the underlying subspace) while being computationally efficient and flexible in a wide range of examples. This method is called Spherical Geodesic Gradient Descent (SGGD).
    \item We give an overview of all existing theoretical results on the adversarial RSR problem.
\end{enumerate}

The paper is structured in the following way. First, Section  \ref{sec:review} reviews related work on RSR, with a focus on robustness to adversarial outliers. Then, Section  \ref{sec:identhard} discusses information-theoretic and computational hardness bounds in the RSR problem. After this, Section  \ref{sec:ransac} presents the RANSAC algorithm and its associated theoretical guarantees. Following this, the SGGD method is presented in Section  \ref{sec:sggd}. We compare the theoretical guarantees of all RSR algorithms that are robust to adversarial outliers in Section  \ref{sec:comparison}, and the discussion highlights the upsides and downsides to each of these methods. In Section  \ref{sec:centering} we consider the case where the underlying subspace is affine. Finally, Section  \ref{sec:conclusion} concludes this work and discusses future directions.

\subsection{Notation}
\label{subsec:notation}

The Euclidean norm on $\reals^D$ will be denoted by $\|\cdot\|_2$ or sometimes $\|\cdot\|$. For a vector $\bx \in \reals^D$, $\| \bx \|_0$ denotes its number of non-zero entries. The set $O(D,d) \subset \reals^{D
\times d}$ is the set of semi-orthogonal matrices, whose columns are orthonormal. Projection matrices $\bP_L$ and $\bQ_L$ will denote the orthogonal projection onto $L$ and $L^\perp$, respectively. The largest principal angle between subspaces $L_1, L_2 \in G(D,d)$ is denoted by $\theta_1(L_1, L_2)$, where we recall that this angle is the largest singular value of $\bP_{L_1}-\bP_{L_2}$. Whenever we refer to a dataset $\cX$ or its subsets, we are really referring to the multiset $\cX$ and its subsets, since we allow for duplicate elements. The number of elements in a dataset $\cX$ is written as $|\cX|$.
The spherization operator, $\widetilde{\cdot}$, projects a point to the unit sphere as follows:
$\widetilde{\bx} = \bx / \|\bx\|_2$. For a data matrix $\bX \in \reals^{D
\times N}$, whose columns are the data points $\bx_i \in \reals^D$, we denote by $\widetilde{\bX}$ the matrix with the spherization
operator applied to each data point (column-wise).
The smallest angle between vectors $\bv_1, \bv_2 \in \reals^D$ will be denoted by $\angle(\bv_1, \bv_2)$, which takes values in $[0,\pi]$. The angle between a vector $\bv \in \reals^D$ and a subspace $L \in G(D,d)$ is written as $\angle(\bv, L)$, which takes values in $[0, \pi/2]$. We will define a ball on the Grassmannian by
$$
B(L, \gamma) = \{L' : \theta_1(L',L) < \gamma\}.
\label{}
$$
We write $a \gtrsim b$ if $a \geq C b$, for some absolute constant $C$.

\section{Review of Related Work}
\label{sec:review}

In this section, we will review the most important previous work on RSR with a particular emphasis on adversarial robustness. A comprehensive overview of the RSR problem in general is given in~\cite{lerman2018overview}.

One of the most important topics further explored in our work is the concept of stability constraints on models of data in the RSR problem. We borrow the concept of stability from previous work by~\cite{lerman2015robust,maunu2019well,lerman2018overview}. It is used to make a theoretical data model well-defined for an RSR dataset. To do this, it restricts the so-called alignment of outliers (which is, in some sense, how ``low-dimensional'' they are) and requires permeance of the inliers, which was mentioned earlier. A rigorous treatment of these concepts is given in III-A of~\cite{lerman2018overview}. In this paper, we focus on the case where the magnitudes and directions of the outliers are unrestricted. Therefore, restriction of the alignment of outliers translates to restriction of their number, as will be seen later.

Many works exist on the RSR problem, such as robust covariance estimation \cite{maronna1976robust, tyler_dist_free87, locantore1999robust, maronna2005principal, zhang2016robust}, projection pursuit \cite{friedman1974projection, huber_book, Li_85, Ammann1993, maronna2005principal, choulakian06, kwak08, mccoy2011two}, $L_1$-PCA \cite{baccini1996l1, ke2005robust, yu2012efficient, brooks2013pure, markopoulos2014optimal, markopoulos2018outlier}, least absolute deviations \cite{ding2006r, zhang2009median, mccoy2011two, xu2012robust, zhang2014novel, lerman2015robust, clarkson2015input, lerman2017fast, maunu2019well}, outlier filtering methods \cite{Xu1995, xu2013outlier, you2017provable}, exhaustive subspace search \cite{fischler1981random, hardt2013algorithms, arias2017ransac}, and many others. Due to the NP-hardness and non-convexity involved in this problem, it is unclear what optimal strategies are, both in terms of computational time and accuracy.  Further, the model assumed by the problem itself is rather artificial, and in real applications the data is frequently more messy.  More work must be done on understanding the performance of RSR algorithms on real data examples to drive the field forward.

To the best of our knowledge the estimator in~\eqref{eq:l0} is first mentioned in~\cite{lp_recovery_part1_11} as the global $\ell_0$-subspace. Nevertheless,~\cite{lp_recovery_part1_11} focused on $\ell_p$-subspaces for $p>0$ and advocated the case where $p=1$. For simplicity, we clarify these subspaces when $p=1$. For this purpose, we focus on the minimization formulation of the $\ell_0$-estimator in~\eqref{eq:l0min}. The $\ell_1$-subspace formulated in~\cite{lp_recovery_part1_11} may be considered a relaxation of~\eqref{eq:l0min} and is given by
\begin{equation}\label{eq:lad}
\argmin_{L \in G(D,d)} \sum_{\bx_i \in \cX} \| \bQ_L \bx_i \| =\argmin_{L \in G(D,d)}  \sum_{\bx_i \in \cX} \| \bx_i - \bP_L \bx_i \|.
\end{equation}
This subspace is also referred to as the least-absolute-deviations subspace~\cite{lerman2018overview}.
It can be independently motivated by the analogous geometric median in modeling centers of datasets~\cite{lopuhaa1991breakdown}, since minimizers of~\eqref{eq:lad} could be called ``geometric median subspaces''~\cite{zhang2014novel}.
We remark that \cite{lerman2018overview} mentions a different formulation for an $\ell_1$-subspace that relaxes~\eqref{eq:l0} instead of \eqref{eq:l0min}.
The optimization problem in~\eqref{eq:lad} is NP-hard~\cite{clarkson2015input}.
Minimization of various convex relaxations of this energy can be seen in~\cite{mccoy2011two,xu2012robust,zhang2014novel,lerman2015robust}. Direct minimization of this energy in special settings was considered in~\cite{ding2006r,lerman2017fast,maunu2019well}. Out of these algorithms based on least absolute deviations, only Outlier Pursuit (OP), a convex relaxation of this cost, has guarantees for adversarial outliers~\cite{xu2012robust}.

Another algorithm with similar guarantees for adversarial outliers is Thresholding based Outlier Robust PCA (TORP) \cite{cherapanamjeri2017thresholding}. It iterates between fitting a PCA subspace and rejecting points that are either far from the subspace or are highly incoherent (which avoids the problem of low-dimensional inliers). While its guarantees are generally comparable to those of~\cite{xu2012robust}, for Gaussian noise they are significantly stronger. The TORP algorithm is also computationally efficient, unlike OP. However, this algorithm, as well as OP, requires the percentage of outliers as input.

Maunu et al.~\cite{maunu2019well} studied the energy landscape of the robust least absolute deviations function over the non-convex space $G(D,d)$.
However, as we will discuss, this energy function is not without some issues. In particular, it is not robust to large adversarial outliers.
Indeed, their results work for adversarial bounded outliers, but are sensitive to large differences in scale between outliers and inliers.

Other RSR methods compute the top eigenspaces of robust covariance estimators.
Many examples of robust covariance estimators are built on Maronna's original M-estimator of covariance~\cite{maronna1976robust}, which was extended by~\cite{Donoho1992,stahel1981breakdown,tyler_dist_free87,dumbgen1998tyler,locantore1999robust,visuri2000sign}, among many others.
While some robust covariance estimators are not affected by scale differences between inliers and outliers~\cite{tyler_dist_free87,locantore1999robust,visuri2000sign}, we are not aware of any tractable RSR estimator based on a robust covariance that is guaranteed to be robust to adversarial outliers. In this work we use for initialization purposes the spherical PCA~\cite{locantore1999robust,visuri2000sign,maronna2006robust}, which is the simplest subspace estimator that is based on a robust covariance. It computes the top eigenspace of the spherized sample covariance matrix $\widetilde{\bSigma}$, which is also known as the ``spatial sign matrix''~\cite{visuri2000sign}. Specifically, the spherized sample covariance is given by
\begin{equation}
\widetilde{\bSigma} = \frac{1}{N-1} \sum_i \frac{\bx_i \bx_i^\top}{\| \bx_i \|_2^2}.
\end{equation}

Some works have given theoretical guarantees for algorithms under the assumption of general position outliers~\cite{hardt2013algorithms,zhang2016robust,arias2017ransac} (one possible definition of this notion appears in Definition \ref{def:general_pos_out}). In this case, the outliers can have arbitrary magnitude and may lie close to low-dimensional subspaces, but they are not allowed to have linearly dependent structure. While these distributions can be approximately adversarial, these algorithms cannot deal with arbitrary outlier distributions, which may contain low-dimensional structure.

An example of a method with guarantees for general position outliers is RandomizedFind. It searches for linearly dependent $D$-subsets of points, and thus linearly dependent outliers present a problem~\cite{hardt2013algorithms}.
Another similar method is a RANSAC-type algorithm that sub-samples $(d+1)$ points until a linearly dependent subset is found~\cite{arias2017ransac}. This procedure does not work with adversarial outliers for a similar reason. Indeed, suppose that outliers all lie on a 1-dimensional subspace. Then, if one samples a subset with just two of these outliers, one would return a corrupted estimate of the subspace. We will later discuss modifications of the RANSAC algorithm that work for these degenerate cases up to certain limits.

Recent algorithms in the literature on adversarial robustness can also be used to learn subspaces~\cite{diakonikolas2018robustly}. For example, the Robustly Learning a Gaussian (RLG) algorithm could be used to estimate an underlying low-rank covariance matrix up to $O(\epsilon)$ error when there is an $\epsilon$-percentage of outliers. However, this result requires the inlier data to be Gaussian. Another recent paper by~\cite{steinhardt2018resilience} uses resilience to approximate low-rank matrices in the presence of adversarial outliers. This method, which we call Resilience Recovery (RR), outputs a matrix of higher rank that can approximate a low-rank matrix. For both of these methods, exact recovery of a subspace is not possible.

We are most interested in methods that are efficient computationally. Ignoring for a moment iteration complexity, only a few iterative algorithms achieve the per-iteration complexity $O(NDd)$, which is of the same order as standard PCA. These include methods such as FMS~\cite{lerman2017fast}, GGD~\cite{maunu2019well}, TORP~\cite{cherapanamjeri2017thresholding}, and RANSAC~\cite{arias2017ransac}. While the number of iterations for any of these algorithms may be quite large, they each have conditions where they can bound the rate of convergence, although this is a hard fact to quantify in general.

\section{Information-Theoretic and Computational-Hardness Thresholds in RSR}
\label{sec:identhard}

This section will mainly study information-theoretic thresholds on the SNR under different assumptions and models. To be more precise, for adversarial outliers and specific models of noiseless inliers, we give the order of the smallest SNR, which we refer to as the information-theoretic threshold, under which the noiseless RSR formulation is well-defined. The notion of a well-defined formulation for noiseless RSR can be rigorously defined by requiring that $L_*$ is the unique solution of both \eqref{eq:l0} and its following variant,
\begin{equation}\label{eq:l0_variant}
\argmax_{L \in G(D,d)} \left|  \cX_{\mathrm{in}} \cap L \right|.
\end{equation}

Examples of ill-defined RSR were already discussed at length in \S{III.A} of~\cite{lerman2018overview}. We review them here while using our new definition for a well-defined noiseless RSR problem. In view of this definition, the problem is ill-defined if the inliers are contained in a subspace of dimension less than $d$. Indeed, in this case \eqref{eq:l0_variant} does not have a unique solution, since there are infinite number of $d$-subspaces containing the inliers. Even when the inliers span a subspace of dimension $d$ and the solutions to~\eqref{eq:l0_variant} and~\eqref{eq:l0} are unique, the solution to~\eqref{eq:l0_variant} may not coincide with the solution to \eqref{eq:l0}. An example of such a case is demonstrated in Figure 2b of~\cite{lerman2018overview}. As discussed by the authors, examples like this of ill-defined RSR can be avoided by assuming that the inliers permeate the underlying subspace in addition to restricting the fraction of adversarial outliers. In general terms, permeance of inliers means that they are nicely spread throughout the subspace. This is in contrast to the example in the mentioned figure, where the inliers concentrate on a low-dimensional subspace of $L_*$. In Section  \ref{subsec:perminident} we discuss a specific condition under which the inliers nicely permeate the underlying subspace $L_*$ and establish an information-theoretic threshold of order $O(1)$ under this condition. We remark that~\cite{lerman2018overview} also discusses conditions on the outliers that guarantee that the RSR problem is well-defined. As mentioned earlier, in our setting of adversarial outliers, our only restriction regarding the outliers is an upper bound on their fraction or a lower bound on the SNR. For completeness, we also briefly discuss in Section  \ref{subsec:perminident} the non-adversarial case where in addition to the permeated inliers, the outliers nicely permeate the ambient space. In this case, we easily show that the information-theoretic threshold is arbitrarily small as the sample size increases.

In practice the lowest SNR under which existing algorithms can recover the underlying subspace in well-defined settings is $O(d)$. This is indicated by results for specific algorithms in \cite{xu2012robust}, \cite{cherapanamjeri2017thresholding} and this work. Motivated by this observation, \cite{xu2012robust} claims that the information-theoretic threshold is $O(d)$, where their argument is based on a special example.
This peculiar example is reviewed in Section  \ref{subsec:inlierident}, where we show that it can be modified to result in an arbitrarily large information-theoretic threshold for increasing sample size. We thus cannot accept their argument.
In order to address such examples of high information-theoretic thresholds, we provide in this section a general information-theoretic threshold that applies to the different cases of Section  \ref{subsec:perminident} and Section  \ref{subsec:inlierident}.

We believe that the practical $O(d)$ threshold is due to a computational-hardness threshold. This threshold corresponds to the lowest SNR under which a well-defined RSR is computable in polynomial time. Previous results indicate that, for adversarial outliers, this threshold is of order $O(d)$. We review these results and this notion of threshold in Section  \ref{subsec:hardness}.

\subsection{Information-Theoretic Thresholds for Permeating Inliers}
\label{subsec:perminident}

The notion of permeance of inliers was discussed above. Its main purpose is to guarantee a well-defined setting of the RSR problem by restricting the possible distributions of the inliers. 

Xu et al.~\cite{xu2012robust} ensure permeance in an RSR setting with adversarial outliers by bounding the inlier incoherence, which is a rather technical notion inspired by works in compressed sensing. An upper bound on the incoherence implies that the inliers cannot be concentrated around lower-dimensional subspaces~\cite{xu2012robust}. Other works have used lower bounds on certain directional statistics of the inliers~\cite{lerman2015robust, maunu2019well}, although these may be insufficient for the current study, as we later see in Example~\ref{example:car_2}.

In this current work, we use two different conditions that imply some sort of permeance of inliers. The first condition assumes general position of inliers. Below, we clarify this notion and provide an information-theoretic threshold for general position inliers and adversarial outliers. We later show in Section \ref{sec:ransac} that the setting of general position is natural for providing guarantees to a RANSAC-type algorithm. Another condition that implies permeance of inliers requires the notion of a bounded spherical $d$-condition number. Later, in Section \ref{sec:sggd}, we explain this bound and provide performance guarantees for our proposed SGGD algorithm assuming this bound and adversarial outliers. However, we do not have information theoretic limits for this type of permeance.

We first state the definition of inliers in general position. A $d$-subset is a subset with $d$ points.
\begin{definition}[General position inliers]
	A set of inliers, $\cX_{\mathrm{in}} \subset L_* \in G(D,d)$, is said to lie in general position if all $d$-subsets of them are linearly independent (or equivalently, all $d$-subsets span $L_*$).
\end{definition}
Under the assumption of general position inliers, we obtain an $O(1)$ information-theoretic threshold in the following lemma.
\begin{lemma}\label{lemma:geninlier}
    If we assume that the points in $\cX_{\mathrm{in}}$ are in general position within $L_*$ and $\cX_{\mathrm{out}} \subset \reals^D$, then the RSR problem is well-defined when the number of outliers is less than $ (N-d+1)/2 $.
\end{lemma}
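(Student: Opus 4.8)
The plan is to reduce well-definedness to a single combinatorial fact about how many inliers a competing $d$-subspace can contain. Recall that, per the definition in this section, well-definedness requires $L_*$ to be the \emph{unique} maximizer over $L \in G(D,d)$ of both $|\cX_{\mathrm{in}} \cap L|$ in the variant \eqref{eq:l0_variant} and $|\cX \cap L|$ in the full problem \eqref{eq:l0}. I would prove both uniqueness statements from one structural lemma plus elementary counting.

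First I would put the outlier hypothesis into the form actually needed. Writing $N = \ni + \no$, the inequality $\no < (N-d+1)/2$ rearranges to $\no < \ni - d + 1$; since all three quantities are integers, this is equivalent to $\ni - \no \geq d$, and in particular $\ni \geq d$, so that at least one $d$-subset of inliers exists. The central lemma I would then establish is that every $d$-subspace $L \neq L_*$ satisfies $|\cX_{\mathrm{in}} \cap L| \leq d-1$. Suppose instead that such an $L$ contained $d$ inliers; these form a $d$-subset, which by the general-position hypothesis is linearly independent and hence spans $L_*$. As all of them lie in $L$, this forces $L_* \subseteq L$, and since $\dim L = \dim L_* = d$ we conclude $L = L_*$, a contradiction. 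Conversely, $L_*$ contains all of the inliers, so $|\cX_{\mathrm{in}} \cap L_*| = \ni$.

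Given this lemma, the two uniqueness claims follow by counting. For \eqref{eq:l0_variant}, $L_*$ attains $\ni \geq d > d-1$ inliers while every competitor attains at most $d-1$, so $L_*$ is the strict maximizer. For \eqref{eq:l0} I would bound $|\cX \cap L_*| \geq |\cX_{\mathrm{in}} \cap L_*| = \ni$, and for any $L \neq L_*$ split $|\cX \cap L| \leq |\cX_{\mathrm{in}} \cap L| + |\cX_{\mathrm{out}} \cap L| \leq (d-1) + \no$; the reduced hypothesis $\ni - \no \geq d$ then gives $\ni > d - 1 + \no$, so $L_*$ again strictly dominates. Note that outliers that happen to fall inside $L_*$ only help, which is why the weak bound $|\cX \cap L_*| \geq \ni$ suffices.

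The only place with genuine content is the central lemma, where general position is exactly what prevents a rival subspace from ``borrowing'' $d$ inliers; everything else is bookkeeping. It is worth recording the intuition behind the factor of $2$: in the adversarial worst case all $\no$ outliers can be placed inside a single competing $d$-subspace, which therefore accrues up to $d-1+\no$ points for free, so $L_*$ must win on its inlier surplus $\ni - (d-1)$, and requiring this surplus to exceed $\no$ is precisely the stated threshold. I do not anticipate any analytic difficulty; the main care is simply to keep the strict-versus-nonstrict inequalities and the integrality step consistent so that uniqueness (not merely optimality) of $L_*$ is obtained.
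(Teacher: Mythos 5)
Your proof is correct and follows essentially the same route as the paper's: the key observation in both is that general position forces any $L \neq L_*$ to contain at most $d-1$ inliers, so a competitor accrues at most $\no + d - 1$ points and $L_*$ wins strictly once $\ni > \no + d - 1$, which rearranges to the stated threshold. Your write-up is somewhat more careful than the paper's (you explicitly verify uniqueness for both \eqref{eq:l0} and \eqref{eq:l0_variant} and track the integrality step), but the underlying argument is identical.
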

\begin{proof}
	In the worst case, all outliers lie on a single 1-dimensional subspace. In this case, we can just take them all to be the same point. Then, due to the general position assumption of the inliers, we see that any $L \in G(D,d) \setminus \{L_*\}$ contains at most $N_{\mathrm{out}} + d-1$ points. Thus, in order for the RSR problem to be well-defined, the number of inliers must be at least
	\begin{equation}\label{eq:geninlierbd}
		N_{\mathrm{in}} > N_{\mathrm{out}} + d-1.
	\end{equation}
	Plugging in $N_{\mathrm{in}} = N - N_{\mathrm{out}}$, we find
	$$ N_{\mathrm{out}} < \frac{N - d + 1}{2}$$
	is the necessary condition for the RSR problem to be well-defined.
\end{proof}
 In this lemma, using the fact that the results imply $N_{\mathrm{in}} >  (N+d-1)/2 $ and $N_{\mathrm{out}} <   (N-d+1)/2 $ the related SNR bound is
\begin{equation}\label{eq:SNR1}
\SNR > \frac{N+d-1}{N-d+1}.
\end{equation}
As $N \to \infty$, we see that this bound goes to 1.

For comparison, we include a case where $\cX_{\mathrm{out}}$ also satisfies a general position condition. This condition is rigorously defined as follows.
\begin{definition}[$d$-subspace general position outliers] \label{def:general_pos_out}
	A set of outliers $\cX_{\mathrm{out}} \subset \reals^D$ is said to lie in $d$-subspace general position with respect to the inliers if
	$$\max_{L \in G(D,d) \setminus \{L_*\}} |\cX \cap L| = d .$$
\end{definition}\
If we assume that the points in $\cX_{\mathrm{in}}$ are in general position within $L_*$ and the points in $\cX_{\mathrm{out}}$ are in $d$-subspace general position with respect to $\cX_{\mathrm{in}}$, then the RSR problem is well-defined as long as $\left|\cX_{\mathrm{in}}\right| > d$. This is because the condition in Definition~\ref{def:general_pos_out} immediately implies that the only subspace that contains more than $d$ points in this case is $L_*$. The information-theoretic bound in this setting is $\SNR \geq (d+1)/\no$, which implies that one can take $\SNR \to 0$ as $\no$ grows.

\subsection{Information-Theoretic Bounds for More General Inliers}
\label{subsec:inlierident}

In this section, we provide information-theoretic bounds for cases with adversarial outliers when inliers are not necessarily in general position. We first motivate our discussion with an example provided to us by an author of \cite{xu2012robust} in order to explain their claim that the information-theoretic SNR threshold is $d$ in general.

\begin{example} \label{example:car_1}
Consider the case where $N_{\mathrm{in}}$ is divisible by $d$ and the inliers are equally partitioned between the points $\be_1$,..., $\be_d \in \reals^D$, where $\be_i$ is the $i$th unit coordinate vector. We note that if the number of outliers is less than $N_{\mathrm{in}}/d$, that is, the SNR is larger than $d$, then the RSR problem is well-defined. However, if the number of outliers is at least $N_{\mathrm{in}}/d$, that is, the SNR is at most $d$, then the problem is generally ill-defined.
Indeed, in this case if all outliers lie, for example, on the point $\be_{d+1}$, then the solution of \eqref{eq:l0} is either not unique (when the SNR is $d$) or it is not $L_*=\Sp(\be_1,...,\be_d)$, which is the solution of \eqref{eq:l0_variant}.
\end{example}

This example can be easily modified to yield an arbitrarily large information-theoretic SNR threshold.
\begin{example} \label{example:car_2}
Consider the case of $N_{\mathrm{in}} - (d-1)$ inliers equal to $\be_1$ and the other $d-1$ inliers equal to $\sqrt{N_{\mathrm{in}} - (d-1)} \be_j$ for $j=2, \dots, d$, where $d>1$. If we take at least 2 outliers equal to $\be_{d+1}$, then the $\ell_0$-estimator must contain $\be_{d+1}$ in its span. 
The lowest SNR under which the problem is well-defined is $N_{\mathrm{in}}/2$. This threshold can be made arbitrarily large by letting $N_{\mathrm{in}}$ grow.
\end{example}

We note that in both examples the inliers have equal variance in all directions. Nevertheless, the second example indicates that even under this nice condition, the SNR threshold may increase with the number of inliers. 

We also note that estimators that respect scale within the dataset could still result in well-defined settings for Example~\ref{example:car_2} with smaller SNR. However, this is not the concern of our study here, since these estimators are usually broken by taking the magnitude of the outliers to be sufficiently large.

Next, we provide a more general result on the information-theoretic threshold for adversarial outliers that applies to both examples above and also to the setting of Lemma~\ref{lemma:geninlier} with general position inliers.
\begin{lemma}\label{lemma:gensnrbd}
    Assume a noiseless RSR setting such that $0< \min_{\bv \in L_* \cap S^{D-1}} \| \bX_{\mathrm{in}}^\top \bv \|_0$ and such that $L_*$ is the unique solution of \eqref{eq:l0_variant}.
    If
    \begin{equation}\label{eq:snrl0bd}
        \SNR > \frac{N_{\mathrm{in}}}{\min_{\bv \in L_* \cap S^{D-1}} \| \bX_{\mathrm{in}}^\top \bv \|_0},
    \end{equation}
    then the RSR problem is well-defined.
\end{lemma}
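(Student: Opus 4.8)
The plan is to reduce the claimed SNR condition to a single combinatorial inequality, $N_{\mathrm{out}} < m$, where I abbreviate $m := \min_{\bv \in L_* \cap S^{D-1}} \|\bX_{\mathrm{in}}^\top \bv\|_0$, and then to show that this inequality forces $L_*$ to be the strict maximizer of $|\cX \cap L|$ in \eqref{eq:l0}. Since the hypothesis already supplies uniqueness of $L_*$ for \eqref{eq:l0_variant}, establishing uniqueness for \eqref{eq:l0} is all that remains for well-definedness.

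First I would reinterpret $m$ geometrically. For a unit vector $\bv \in L_*$, the quantity $\|\bX_{\mathrm{in}}^\top \bv\|_0$ counts the inliers not orthogonal to $\bv$; its complement counts the inliers lying in the hyperplane $\bv^\perp$, and since the inliers sit inside $L_*$ these are exactly the inliers contained in the $(d-1)$-dimensional subspace $L_* \cap \bv^\perp$. As $\bv$ ranges over $L_* \cap S^{D-1}$, the subspaces $L_* \cap \bv^\perp$ range over all $(d-1)$-dimensional subspaces of $L_*$, so
$$ m = N_{\mathrm{in}} - \max_{H} \left| \cX_{\mathrm{in}} \cap H \right|, $$
the maximum being over $(d-1)$-dimensional subspaces $H \subset L_*$.

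Next I would bound $|\cX \cap L|$ for an arbitrary competitor $L \in G(D,d) \setminus \{L_*\}$. Because $\dim(L \cap L_*) \le d-1$, every inlier lying in $L$ lies in some $(d-1)$-dimensional subspace of $L_*$, so $|\cX_{\mathrm{in}} \cap L| \le \max_H |\cX_{\mathrm{in}} \cap H| = N_{\mathrm{in}} - m$, while the outliers contribute at most $N_{\mathrm{out}}$; hence $|\cX \cap L| \le (N_{\mathrm{in}} - m) + N_{\mathrm{out}}$. On the other hand $|\cX \cap L_*| \ge N_{\mathrm{in}}$, since all inliers lie in $L_*$ (and this holds regardless of whether any outliers happen to fall in $L_*$, which can only increase the count for $L_*$). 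Thus whenever $N_{\mathrm{out}} < m$ we obtain $|\cX \cap L| < N_{\mathrm{in}} \le |\cX \cap L_*|$ for every $L \ne L_*$, making $L_*$ the unique maximizer. Finally I would translate $N_{\mathrm{out}} < m$ into the stated bound by dividing: $\SNR = N_{\mathrm{in}}/N_{\mathrm{out}} > N_{\mathrm{in}}/m$, which is exactly \eqref{eq:snrl0bd}.

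The only genuinely delicate point is the geometric identification of $m$ with the worst-case $(d-1)$-dimensional subspace of inliers, together with the observation that an arbitrary competitor $L$ can capture at most that many inliers because $L \cap L_*$ is at most $(d-1)$-dimensional; once this is in place the rest is bookkeeping. I would also remark that the hypothesis $m > 0$ is precisely what keeps the right-hand side of \eqref{eq:snrl0bd} finite and, incidentally, already guarantees that no $(d-1)$-dimensional subspace of $L_*$ contains all inliers, so that $L_*$ is automatically the unique solution of \eqref{eq:l0_variant} as well.
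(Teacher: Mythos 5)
Your proposal is correct and follows essentially the same argument as the paper: identify $m=\min_{\bv}\|\bX_{\mathrm{in}}^\top\bv\|_0$ with $N_{\mathrm{in}}$ minus the maximum number of inliers on any $(d-1)$-subspace of $L_*$, bound any competitor $L\neq L_*$ by $N_{\mathrm{out}}+N_{\mathrm{in}}-m$ points via $\dim(L\cap L_*)\le d-1$, and conclude uniqueness from $N_{\mathrm{out}}<m$. You simply spell out the two steps the paper asserts without justification (and your closing observation that $m>0$ already forces uniqueness for \eqref{eq:l0_variant} is a correct bonus).
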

\begin{proof}
    Denote $c(d,\ni)= \min_{\bv \in L_* \cap S^{D-1}} \| \bX_{\mathrm{in}}^\top \bv \|_0$. This definition ensures that the maximum number of inliers in any $(d-1)$-subspace of $L_*$ is $\ni - c(d,\ni)$. Thus, for any subspace in $G(D,d) \setminus \{L_*\}$, the maximum number of points it contains is $\no + \ni - c(d,\ni)$. This implies that $L_*$ is the unique solution of \eqref{eq:l0}
    as long as $\ni > \no + \ni - c(d,\ni)$, or $c(d, \ni) > \no$. This yields the SNR bound in the lemma, and the proof is concluded.
\end{proof}

We note Lemma \ref{lemma:geninlier} can be obtained as a special case of Lemma \ref{lemma:gensnrbd} since, if inliers are in general position,
$$\min_{\bv \in L_* \cap S^{D-1}} \| \bX_{\mathrm{in}}^\top \bv \|_0 = \ni - (d-1).$$
This implies that the maximum number of points in any $d$-subspace that is not $L_*$ is at most $N_{\mathrm{out}} + (d-1)$ and consequently results in~\eqref{eq:geninlierbd}.
Similarly, one can recover the optimal SNR estimates for Examples \ref{example:car_1} and \ref{example:car_2} from Lemma \ref{lemma:gensnrbd}.

\subsection{Computational Hardness of Robust Subspace Recovery}
\label{subsec:hardness}

An essential result on RSR is the computational-hardness theorem by Hardt and Moitra~\cite{hardt2013algorithms}. It gives the general bound of $\SNR \geq d/(D-d)$ for the noiseless RSR problem, beyond which the problem becomes Small-Set-Expansion (SSE) hard (see Theorem 1.4 of \cite{hardt2013algorithms} for precise details). Hardt and Moitra further explore the SNR threshold in a setting where both inliers and outliers lie in general position and show that the SNR threshold of $d/(D-d)$ can be achieved by polynomial-time algorithms. In their setting, it is interesting to observe that the SNR can approach zero for small fixed $d$ and arbitrarily large $D$. However, this does not coincide with our earlier result that shows, for general position inliers and outliers, the information-theoretic SNR can be arbitrarily small for arbitrarily large sample size, since our previous result applied to fixed $d$ and $D$.

The computational-hardness threshold $d/(D-d)$ is largest when $D-d = 1$, which implies that $d/(D-d) = d$. This scenario seems to be the most relevant to adversarial outliers. Indeed, the most difficult case with adversarial outliers in our study is when they all lie on a line through the origin, so that a subspace containing this line can be the solution of \eqref{eq:l0}. This can be seen in the proof of Lemma \ref{lemma:geninlier}. In this case, the dataset is contained in a $(d+1)$-subspace, which means that effectively $D=d+1$. We further remark that when $d=D-1$, it is not just SSE-hard, but also NP-hard, to recover $L_*$ past the boundary $\SNR = d$ (see \cite{khachiyan1995complexity} and footnote 2 of \cite{hardt2013algorithms}). We believe that this computational-hardness result is related to the current guarantees for RSR with nicely permeated inliers and adversarial outliers, which are $O(d)$ and not $O(1)$ as in the information-theoretic threshold in Lemma \ref{lemma:geninlier}. We remark though that we can obtain in~Section  \ref{sec:ransac} SNR for recovery by RANSAC which is of order $O(d/\poly\log(d))$ and discuss this issue further in Section  \ref{sec:conclusion}.

\section{Smallest SNR Obtained by an Efficient Algorithm}
\label{sec:ransac}

We demonstrate in this section that RANSAC achieves the smallest existing SNR bound for adversarial outliers in the RSR problem. As mentioned, the RANSAC algorithm of \cite{arias2017ransac} does not work for adversarial outliers. Their algorithm works in the following way. They repeatedly subsample subsets of $d+1$ points until they find a set that is linearly dependent. Then, as long as outliers do not have a linearly dependent structure, this algorithm is guaranteed to output a set of $d+1$ inliers that span the underlying subspace. However, if the outliers lie on a line, then the algorithm could just as easily output a subset with two or more outliers because such a subset would be linearly dependent. Adversarial outliers that have linearly dependent structure when combined with the inliers may also present issues for this method.

In order to find algorithms that work for adversarial outliers, we must modify this algorithm. To accomplish this, we revert to the generic formulation of RANSAC based on~\cite{fischler1981random}, to obtain the RANSAC algorithm for RSR given in Algorithm~\ref{alg:ransac}.

This algorithm takes as input a dataset $\cX$, subspace dimension $d$, tolerance $\tau$, consensus number $m$, and maximum number of iterations $n$. The algorithm repeatedly samples subsets of points and fits a $d$-subspace to them. Note that if the subsampled points are linearly dependent, then the algorithm needs to sample more than $d$ points. This sampling can be done quite easily: points are selected uniformly at random without replacement and added to the subset until they span a $d$-subspace.
Then, the algorithm counts how many points have angle less than $\tau$ to the subspace. If this number is greater than $m$, then the algorithm returns this subspace. Otherwise, the algorithm will store the subspace with the highest consensus number so far, and output that subspace after $n$ iterations.

\begin{algorithm}[ht]
	\caption{RANSAC for RSR}
	\label{alg:ransac}
	\begin{algorithmic}[1]
	\STATE{\textbf{Input:} dataset $\cX$, subspace dimension $d$, tolerance $\tau$, consensus number $m$, max iterations $n$}
	\STATE{\textbf{Output:}  $L_* \in G(D,d)$}
	\STATE{$k = 0$, $i = 1$}
	\WHILE{$i \leq n$}
		\STATE {$\cY \gets$ random subset of $\cX$ that spans a $d$-subspace}
		\STATE {$L = \Sp(\cY)$}
		\STATE {$c = \left|\{ \bx \in \cX : \angle(\bx, L) \leq \tau\} \right|$}
		\IF{$c>k$}
			\STATE{$L_* = L$}
			\STATE{$k = c$}
		\ENDIF
		\IF{$k > m$}
			\STATE{\textbf{return }{$L_*$}}
		\ENDIF
		\STATE {$i \gets i + 1$}
	\ENDWHILE
	\STATE{\textbf{return }{$L_*$}}
	\end{algorithmic}
\end{algorithm}

For the case of noiseless RSR with adversarial outliers, we can fix $m = N/2$ and $\tau=0$. The choice of $m$ can be justified by the information-theoretic bound we derived in Lemma~\ref{lemma:geninlier} for general position inliers. As we will prove in the following, RANSAC actually requires a number of inliers much larger than this in order to bound the number of iterations probabilistically. Thus, this choice of $m$ seems sufficient for the study of the adversarial case.


In terms of theoretical guarantees, we follow the example of~\cite{arias2017ransac} and obtain results for small values of $d^2=o(N_{\mathrm{in}})$. This is a limitation of the RANSAC method, and it is needed to obtain the nice recovery guarantees in this section. However, in practice, especially with very high dimensional data, one is usually concerned with finding very low-dimensional subspaces.

This algorithm can work in cases where the RANSAC algorithm of~\cite{arias2017ransac} cannot. To illustrate this, we will prove the following proposition. We define the set $\cL(\cX) = \{L \in G(D,d) : L = \Sp(\bx_{i_1},\dots,\bx_{i_l}), l \geq d\}$. This is the set of subspaces in $G(D,d)$ that are spanned by points in $\cX$, which is exactly the set of candidates for the RANSAC method.
\begin{proposition}[Iteration Complexity of RANSAC]\label{prop:ransac}
	Suppose Algorithm~\ref{alg:ransac} is run with $m = N/2$ and $\tau$ satisfying the bound
	\begin{equation}\label{eq:tau}
	\tau \leq \min_{ L \in \cL(\cX)} \min_{ \bx \in \cX \setminus L} \angle(\bx,L).
	\end{equation}
	Assume also that the inliers are in general position on $L_*$, the outliers are distributed in $\reals^D \setminus L_*$, $d^2=o(N_{\mathrm{in}})$, and the lower bound on the SNR in~\eqref{eq:SNR1} is satisfied. Then, Algorithm~\ref{alg:ransac} returns $L_*$ w.h.p.~in time that is potentially exponential in $d$. The expected number of iterations is approximately $O((N / N_{\mathrm{in}})^{d})$.
	
	If it is further assumed that $\SNR \geq c(d)d$, for any $c(d)=\Omega(1/\poly\log(d))$ and $d$ sufficiently large, then Algorithm~\ref{alg:ransac} outputs $L_*$ in a polynomial number of iterations w.h.p. If $c(d)$ is a constant, then this reduces to $O(1)$ iterations w.h.p.
\end{proposition}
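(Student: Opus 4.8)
The plan is to reduce the whole proposition to one elementary event---that a single RANSAC draw produces $L_*$---and then estimate the probability of that event. Two reductions carry the argument. First, a \emph{correctness} reduction: with $\tau$ as in \eqref{eq:tau} and $m=N/2$, the only subspace whose consensus count can exceed $m$ is $L_*$, so the algorithm halts and returns $L_*$ exactly when it first samples it (and its stored best is $L_*$ regardless). Second, a \emph{sampling} reduction: under general position, one iteration produces $L_*$ if and only if the first $d$ points it draws are all inliers. Granting both, the index of the first successful iteration is geometric, and everything reduces to computing its success probability.

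For correctness, the choice \eqref{eq:tau} makes $\tau$ smaller than the angle between any candidate $L\in\cL(\cX)$ and any data point off it, so the count $c=|\{\bx:\angle(\bx,L)\le\tau\}|$ equals $|\cX\cap L|$. Since outliers lie in $\reals^D\setminus L_*$, we have $|\cX\cap L_*|=N_{\mathrm{in}}$. For any $L\neq L_*$ the intersection $L\cap L_*$ has dimension at most $d-1$, and general position forbids $d$ inliers from lying in it, so $L$ carries at most $N_{\mathrm{out}}+(d-1)$ points. By Lemma~\ref{lemma:geninlier}, the bound \eqref{eq:SNR1} is $N_{\mathrm{out}}<(N-d+1)/2$, which yields $N_{\mathrm{in}}>(N+d-1)/2>N/2$; hence $L_*$ clears the threshold. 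Any valid threshold must lie strictly between the competitor ceiling $N_{\mathrm{out}}+(d-1)$ and $N_{\mathrm{in}}$, and $m=N/2$ sits in this interval whenever the SNR is a constant factor above the bound of \eqref{eq:SNR1} (in particular throughout the regime of the second claim); the one delicate point is the boundary case, where the lower-order $+(d-1)$ term must be absorbed, which $d^2=o(N_{\mathrm{in}})$ renders negligible. For the sampling reduction, general position makes every $k$-subset of inliers with $k\le d$ independent, so drawing $j\le d$ inliers spans a $j$-dimensional subspace of $L_*$, and $d$ inliers span $L_*$ exactly; on the other hand any outlier added to the subset is off $L_*$, increases the dimension, and permanently pushes the spanned subspace off $L_*$. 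Thus the draw halts at $L_*$ precisely when its first $d$ picks are inliers.

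Consequently a fresh iteration returns $L_*$ with probability $p=\prod_{j=0}^{d-1}(N_{\mathrm{in}}-j)/(N-j)$, and distinct iterations are independent. Each factor is at most $N_{\mathrm{in}}/N$, while $\prod_{j=0}^{d-1}(1-j/N_{\mathrm{in}})\ge 1-\binom{d}{2}/N_{\mathrm{in}}=1-o(1)$ under $d^2=o(N_{\mathrm{in}})$, so $p=(1-o(1))(N_{\mathrm{in}}/N)^d$; this is exactly where the hypothesis $d^2=o(N_{\mathrm{in}})$ is spent. The first successful iteration is geometric with mean $1/p\approx(N/N_{\mathrm{in}})^d$, giving the stated expected iteration count, and since $\Pr[\text{no success in }n\text{ iterations}]\le(1-p)^n\le e^{-np}$, choosing $n\gtrsim (N/N_{\mathrm{in}})^d\log(1/\delta)$ recovers $L_*$ with probability $1-\delta$. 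Because $N/N_{\mathrm{in}}>1$ and can be as large as about $2$ when the SNR is near $1$, this count is potentially exponential in $d$; multiplying by the $O(NDd)$ cost per iteration gives the total runtime.

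For the polynomial and constant regimes I would substitute the SNR bound directly. Writing $N/N_{\mathrm{in}}=1+N_{\mathrm{out}}/N_{\mathrm{in}}=1+1/\SNR$ and using $1+x\le e^x$ gives $(N/N_{\mathrm{in}})^d\le e^{d/\SNR}$. If $\SNR\ge c(d)\,d$ then $d/\SNR\le 1/c(d)$, so the expected number of iterations is at most $e^{1/c(d)}$: when $c(d)=\Omega(1/\poly\log d)$ this is $e^{O(\poly\log d)}$ (polynomial in $d$ for $c(d)\gtrsim 1/\log d$), and when $c(d)$ is an absolute constant it is $e^{O(1)}=O(1)$. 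The concentration bound from the previous paragraph then promotes these expectations to w.h.p.\ iteration counts. I expect the main obstacle to be the probability estimate together with its conversion into w.h.p.\ and expected-iteration statements---pinning down that $d^2=o(N_{\mathrm{in}})$ is precisely what collapses the hypergeometric product to $(N_{\mathrm{in}}/N)^d$---and, as a secondary but genuinely subtle point, verifying that the fixed threshold $m=N/2$ still isolates $L_*$ once the lower-order $d-1$ competitor term is taken into account.
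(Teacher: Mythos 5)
Your proposal is correct and follows essentially the same route as the paper's proof: establish that under \eqref{eq:tau} and the SNR bound only $L_*$ can reach the consensus threshold $m=N/2$, reduce a successful iteration to drawing $d$ inliers first, and then treat the first success as a geometric random variable with parameter $\binom{N_{\mathrm{in}}}{d}/\binom{N}{d}$, concluding with the same $(1+1/(cd))^d\approx e^{1/c}$ computation for the $\SNR\geq c(d)d$ regimes. The only difference is that you spell out the hypergeometric product and the role of $d^2=o(N_{\mathrm{in}})$ explicitly, where the paper delegates these steps to the analysis in \cite{arias2017ransac}.
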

\begin{proof}
   The proof of this proposition is in large part due to the work of~\cite{arias2017ransac}. Suppose that the inliers are in general position on $L_*$ and the outliers are distributed in $\reals^D \setminus L_*$. Suppose also that $\tau$ satisfies~\eqref{eq:tau} (and, in the noiseless case, we can take $\tau=0$). In this case, notice that $L_*$ is the only subspace in $\cL(\cX)$ such that
	\begin{equation}\label{eq:anglecount}
	| \{\bx \in \cX: \angle(\bx,L_*) \leq \tau \}| \geq N/2.
	\end{equation}
	In other words,
	\begin{equation}
	\max_{L \in \cL(\cX) \setminus \{L_*\}} | \{\bx \in \cX: \angle(\bx,L) \leq \tau  \}| < N/2.
	\end{equation}
	This means that, if $m=N/2$, then the algorithm will return either $L_*$ or terminate after $n$ iterations and return another subspace in $\cL(\cX)$.
	
    We can also quantify how long it takes on average for the algorithm to succeed. If $\alpha>1/2$ is the percentage of inliers, then a good approximation for the number of iterations of Algorithm~\ref{alg:ransac} follows a geometric distribution with parameter $\theta = \binom{\alpha N}{d} / \binom{N}{d}$~\cite{arias2017ransac}. This is the case because, if the algorithm succeeds at some iteration, it selects only $d$-points. The expected number of iterations before success is $O(\alpha^{-d})$~\cite{schattschneider2012enhanced,arias2017ransac}. This is of course exponential in $d$ when considering SNRs approaching the lower bound in~\eqref{eq:SNR1}.
	
    To get a probabilistic statement, let the number of iterations before the algorithm succeeds be a random variable. For any given iteration, the probability of success is $O\left( \alpha^{d} \right)$. Thus, the probability in succeeding in at most $n$ iterations is
	\begin{equation}\label{eq:iter1}
	O\left(1 - \left( 1 - \alpha^{d} \right)^n\right).
	\end{equation}
	If $d$ is small (on the order of a constant), the number of iterations is not too large despite being exponential in $d$. The algorithm outputs $L_*$ w.h.p. for large enough $n$ (exponential in $d$).

    Now we further assume that $\SNR \geq cd$, where we leave $c$'s dependence on $d$ as implied. In this case, the percentage of inliers is at least $cd/(1+cd)$, and the expected number of iterations becomes
	\begin{equation}\label{eq:iter3}
	O\left[\left(\frac{cd}{1+cd} \right)^{-d}\right] = 	O\left[\left(1 + \frac{1}{cd} \right)^d\right] \approx O\left[e^{1/c}\right],
	\end{equation}
	for large enough $d$.
	Plugging this same percentage into~\eqref{eq:iter1}, we can get the probability of succeeding in at most $n$ iterations as
	\begin{equation}\label{eq:iter2}
	O\left(1 - \left( 1 - \left(\frac{cd}{1+cd}\right)^{d} \right)^n\right) = O\left( 1 - \left( 1 - \left(1+\frac{1}{cd}\right)^{-d} \right)^n \right) \approx O\left( 1 - \left( 1 - e^{-1/c} \right)^n\right).
	\end{equation}
	Thus, if $c$ is a fixed constant, then RANSAC will recover $L_*$ with high probability after $n$ iterations, and the expected number of iterations is constant.
	
	We further examine if there are other regimes of $c(d)$ that work for~\eqref{eq:iter3}. Using the Taylor approximation for $\log(1+x)$,
	\begin{equation}
	\left(1+\frac{1}{cd}\right)^{d} = \exp\left( d \log\left(1+\frac{1}{cd}\right) \right) \lesssim \exp\left( d \left( \frac{1}{c(d)d} + O\left(\frac{1}{c(d)^2d^2}\right)\right) \right).
	\end{equation}
	Notice, for example, if $c(d)=1/\log(d)$, then this is $O(d)$. However, if $c(d)$ is the reciprocal of any power of $d$, then this becomes exponential in $d$. This observation yields the final claim in the proposition.
\end{proof}

For our adversarial setting, the hardness results discussed in Section~\ref{subsec:hardness} indicate that RSR becomes NP-hard when the fraction of inliers is $(1-\epsilon) \frac{d}{d+1}$, for a constant $\epsilon > 0$. On the other hand, the bound $\SNR > c(d)d$ corresponds to the fraction of inliers being greater than
\begin{equation}
	\frac{c(d)d}{c(d)d + 1} = \left(1 - \left(1-\frac{d+1}{d+c(d)^{-1}}\right)\right) \frac{d}{d+1}.
\end{equation} 
In the notation of the NP-hardness result, this corresponds to $\epsilon = 1-\frac{d+1}{d+c(d)^{-1}}$. Notice that $\epsilon \to 0$ as $d \to \infty$ if $c(d) = \Omega(1/\poly \log(d))$. On the other hand, Proposition~\ref{prop:ransac} does hold for finite $d$, and the $O(\cdot)$ and $\Omega(\cdot)$ notations are not really asymptotic and are rather used to indicate dependence of the terms on $d$. There thus appears to be some gap between our result and the NP-hardness result in the literature. Exactly rectifying this result for RANSAC with the hardness results of~\cite{khachiyan1995complexity} and~\cite{hardt2013algorithms} remains an open question.


\section{A Competitive Algorithm for Multiple Regimes}
\label{sec:sggd}

In this section we discuss a variant of Geodesic Gradient Descent (GGD)~\cite{maunu2019well} that has good robustness properties to adversarial outliers. While it does not match the SNR bounds of the RANSAC method in Section  \ref{sec:ransac} for adversarial outliers, it achieves better SNR bounds when one begins to put restrictions on outliers. For example, the algorithm presented here achieves almost state-of-the-art bounds for the Haystack Model~\cite{lerman2015robust}. The algorithm also has a very efficient $O(NDd)$ per-iteration complexity and converges linearly under some deterministic conditions.

While this algorithm is non-convex and will not recover global minimizers in general, we can still give deterministic results on recovery. The algorithm we use was given by~\cite{maunu2019well}, with the addition of an initial spherizing step.

We consider an energy function $F: O(D,d) \to [0,\infty)$, which is essentially the energy in~\eqref{eq:lad} with spherized data:
\begin{equation}
\label{eq:sphereenergy}
F(\bV) = \sum_{\bx_i \in \mathcal X}  \frac{\|(\bI - \bV\bV^\top)\bx_i\|}{\|\bx_i\|}.
\end{equation}
Here, we cast this as cost over $O(D,d)$ rather than $G(D,d)$ for convenience, although the formulations are equivalent. This is due to the fact that $\bV \bV^T$ is the orthogonal projection matrix onto the subspace $\Sp(\bV)$. We write the function in this way for easier derivative calculation.

This energy is an intuitive relaxation of the energy in~\eqref{eq:l0min} for the following reasons. First, as discussed in~\cite{lerman2018overview}, least absolute deviations is a natural relaxation of the $\ell_0$ deviations. However, unlike $\ell_0$ deviations, least absolute deviations still is sensitive to the scale of the inliers and outliers. So, the simplest way to remove the scale of the data points is to first spherize them. This leads to a scale-free least absolute deviations algorithm. Another way to motivate the  scale free energy~\eqref{eq:sphereenergy} is that it sums the sines of the angles between the points and the span of $\bV$. That is, 
\begin{equation*}
F(\bV) = \sum_{\bx_i \in \mathcal X}  \sin(\angle(\bx_i, \Sp(\bV))).
\end{equation*}

The idea of spherizing in this way was studied for PCA by~\cite{locantore1999robust,visuri2000sign}. Spherical PCA (SPCA) was actually the algorithm of choice in the comparison of robust procedures in~\cite{maronna2005principal}. However, SPCA has no guarantees for exact recovery, and it can still suffer quite a bit with adversarial outliers.


In order to minimize the energy function~\eqref{eq:sphereenergy}, we directly apply GGD~\cite{maunu2019well} to the spherized dataset $\widetilde \cX$, or equivalently, we consider GGD with the energy function in~\eqref{eq:sphereenergy}. For a review of the geodesics on the Grassmannian, see~\cite{Edelman98thegeometry}. Following the derivation in~\S4.1 of~\cite{maunu2019well}, a (sub)derivative of~\eqref{eq:sphereenergy} with respect to $\bV$ is
\begin{align}
	\frac{\partial}{\partial \bV} F(\bV) &= \sum_{\| \bQ_{\bV} \bx_i \| > 0} -
\frac{1}{\|\bQ_{\bV} \bx_i\|} \frac{\bx_i \bx_i^\top \bV}{\|\bx_i\|}.
\end{align}
Following the theory developed in~\cite{Edelman98thegeometry}, the (sub)gradient in the tangent space of $G(D,d)$ is given by
\begin{align}
	\nabla F(\bV) &=\sum_{\| \bQ_{\bV} \bx_i \| > 0} -\frac{\bQ_{\bV} \bx_i}{\|\bQ_{\bV} \bx_i\|} \frac{\bx_i^\top \bV}{\|\bx_i\|}.
\end{align}
Geodesic gradient steps can be taken in the direction of $-\nabla F(\bV)$ along $G(D,d)$ using this formula. This method is called Spherized Geodesic Gradient Descent, and is abbreviated SGGD.

Notice that all of the theory for GGD extends to the current proposed algorithm as well, as long as $\widetilde \cX$ obeys the deterministic conditions in~\cite{maunu2019well}.

In the following, we will explain the theory behind the SGGD algorithm in the presence of adversarial outliers.
First, in Section  \ref{subsec:welltemp}, we will discuss the well-tempered landscape of~\cite{maunu2019well} and how it extends to SGGD with adversarial outliers. Then, Section  \ref{subsec:convsggd} will state results on the convergence rate of SGGD, which includes a guarantee of linear convergence in certain settings.
After this, Section \ref{subsec:sggdinit} specifies conditions under which SPCA initialization guarantees that SGGD starts close enough to the underlying subspace.
Then, Section  \ref{subsec:completesggd} will present the complete theoretical guarantee for SGGD with adversarial outliers, and Section  \ref{subsec:sggdstat} will explain how SGGD performs for some statistical models of data. Finally, Section  \ref{subsec:sggdnoise} will discuss possible results for SGGD when there is small noise.

\subsection{Generic Well-Tempered Condition}
\label{subsec:welltemp}

The deterministic conditions of \cite{maunu2019well} ensure that the landscape of~\eqref{eq:lad} is well-behaved. By well-behaved, we mean that the energy landscape exhibits a nontrivial basin of attraction around $L_*$, where it is possible to converge to $L_*$ using geodesic gradient descent. Since we just propose to use GGD with an initial spherizing step, all of the theory developed for GGD holds if $\widetilde{\bX}$ satisfies these conditions. As we will see in the following, these conditions are essential for understanding how SGGD performs with adversarial outliers.

We first outline how the conditions of \cite{maunu2019well} extend to the spherized setting. Let $\gamma \in [0,\pi/2]$ be a parameter that determines the size of the basin of attraction around $L_*$. The noiseless stability statistic of the dataset $\cX$ with respect to $L_*$ is defined as
\begin{equation}
\cS(\cX, L_*, \gamma) = \cos(\gamma) \lambda_d \left( \sum_{\cX \cap L_*} \frac{ \bx_i \bx_i^\top}{\| \bx_i \|} \right) - \sup_{L \in B(L_*, \gamma)} \| \nabla F(L; \cX \setminus L) \|_2.
\end{equation}
We are concerned with cases when $\cS(\cX, L_*, \gamma) > 0$. For simplicity of notation in this paper, we will also write $\cS > 0$ and leave the parameters as implied. If $\cS > 0$, then the least absolute deviation energy landscape behaves nicely in $B(L_*, \gamma)$. In order to better understand this condition, the following lower bound for $\cS$ was shown in~\cite{maunu2019well}
\begin{equation}
\cS(\cX, L_*, \gamma) \geq \cos(\gamma) \lambda_d \left( \sum_{\cX \cap L_*} \frac{ \bx_i \bx_i^\top}{\| \bx_i \|} \right) - \sqrt{N_{\mathrm{out}}} \| \bX_{\mathrm{out}}\|_2.
\end{equation}
For the adversarial outlier case, we will work with this bound, as it is tight in the worst cases.

For our purposes, we plug in spherized data to obtain a spherized stability statistic, $\tilde \cS$:
\begin{equation}\label{eq:stab}
   	\tilde \cS(\cX, L_*, \gamma) = \cS(\widetilde{\cX}, L_*, \gamma) \geq \cos(\gamma)  \lambda_d \left( \widetilde{\bX} \widetilde{\bX}^\top \right) -  \sqrt{\no} \| \widetilde{\bX_{\mathrm{out}}} \|_2.
\end{equation}
Theorem 1 of \cite{maunu2019well} applied with the spherized stability statistic is formulated as follows.

\begin{theorem}[\cite{maunu2019well}]\label{thm:landscape}
	Suppose that a noiseless inlier-outlier dataset 
	with an underlying subspace $L_*$ satisfies $\tilde\cS(\cX, L_*, \gamma) > 0$, for some $0<\gamma < \pi/2$. Then,
	all points in ${B(L_*,\gamma)}\setminus \{L_*\}$ have a subdifferential along a geodesic strictly less than $-\tilde\cS(\cX, L_*, \gamma)$, that is, it is a direction of decreasing energy. This implies that $L_*$ is the only local minimizer in ${B(L_*,\gamma)}$.
	
\end{theorem}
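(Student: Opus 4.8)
The plan is to reduce the statement to the landscape analysis of the (unspherized) least-absolute-deviations energy and then supply the directional-derivative estimate that drives that analysis. Since $F$ in~\eqref{eq:sphereenergy} is exactly the energy~\eqref{eq:lad} evaluated on the spherized data $\widetilde{\cX}$, and spherization preserves every angle $\angle(\bx_i,L)$ as well as membership in $L_*$ (an inlier $\bx_i\in L_*$ maps to $\widetilde{\bx}_i\in L_*$, and an outlier stays off $L_*$), the hypothesis $\tilde{\cS}(\cX,L_*,\gamma)=\cS(\widetilde{\cX},L_*,\gamma)>0$ is precisely the stability hypothesis for the dataset $\widetilde{\cX}$. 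Thus the conclusion transfers verbatim, and the only thing to verify is the underlying claim that $\cS>0$ forces a strict descent direction at every $L\neq L_*$ in the ball.

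First I would fix $L\in B(L_*,\gamma)\setminus\{L_*\}$, let $\theta_1\geq\cdots\geq\theta_d$ be its principal angles with $L_*$ (so $\theta_1<\gamma$), and take the unit-speed geodesic $L(s)$ running from $L$ toward $L_*$. Because $F$ is nonsmooth wherever some $\bQ_{L}\bx_i=0$, the relevant object is the one-sided geodesic subderivative $\partial_s^+F(L(s))|_{s=0}$, which I would split as $\partial_s^+F_{\mathrm{in}}+\partial_s^+F_{\mathrm{out}}$ over inliers and outliers. For the inlier term I would invoke the CS (principal-angle) decomposition: in orthonormal bases adapted to the pair $(L,L_*)$, an inlier $\widetilde{\bx}=\sum_j a_j u_j\in L_*$ satisfies $\|\bQ_{L(s)}\widetilde{\bx}\|=(\sum_j a_j^2\sin^2\theta_j(s))^{1/2}$, whose $s$-derivative at $0$ carries a factor $\cos\theta_j\geq\cos\theta_1>\cos\gamma$. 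Summing over inliers, the target is $\partial_s^+F_{\mathrm{in}}|_0\leq-\cos\gamma\,\lambda_d\!\big(\sum_{\cX\cap L_*}\widetilde{\bx}_i\widetilde{\bx}_i^\top\big)$.

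I expect this inlier bound to be the main obstacle. The individual derivatives are naturally controlled by $\sum_i\|\bQ_L\widetilde{\bx}_i\|$, and extracting the \emph{smallest} eigenvalue $\lambda_d$ of the inlier second-moment matrix (rather than a weaker directional quantity) requires relating the angle-weighted quadratic forms $\sum_j a_{ij}^2\theta_j\sin\theta_j\cos\theta_j$, normalized by the unit geodesic speed $\|\theta\|_2$, to the worst-case Rayleigh quotient over unit directions in $L_*$. The case $d=1$ is transparent and guides the multi-angle estimate: there $\|\bQ_{L(s)}\widetilde{\bx}\|=\sin(\theta-s)$, so the inlier derivative is exactly $-\ni\cos\theta\leq-\cos\gamma\,\lambda_d$, matching the claimed form. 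The generalization to several principal angles is where the careful (but routine) computation lives.

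For the outlier term I would apply Cauchy--Schwarz in the tangent space: since the geodesic is unit speed, $|\partial_s^+F_{\mathrm{out}}|_0|\leq\|\nabla F(L;\cX\setminus L_*)\|_2\leq\sup_{L'\in B(L_*,\gamma)}\|\nabla F(L';\cX\setminus L_*)\|_2$, which the lower bound in~\eqref{eq:stab} further dominates by $\sqrt{\no}\,\|\widetilde{\bX_{\mathrm{out}}}\|_2$. Combining the two estimates yields $\partial_s^+F(L(s))|_0\leq-\cos\gamma\,\lambda_d(\cdot)+\sqrt{\no}\,\|\widetilde{\bX_{\mathrm{out}}}\|_2=-\tilde{\cS}<0$, so the geodesic toward $L_*$ is a direction of strictly decreasing energy at every $L\neq L_*$ in the ball. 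Consequently no such $L$ can be a local minimizer, so $L_*$ is the only local minimizer in $B(L_*,\gamma)$, as claimed.
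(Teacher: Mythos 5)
Your proposal is correct and takes essentially the same approach as the paper, which offers no independent proof of this theorem but simply observes that $F$ in~\eqref{eq:sphereenergy} is the least-absolute-deviations energy evaluated on the spherized data, so that $\tilde\cS(\cX,L_*,\gamma)=\cS(\widetilde{\cX},L_*,\gamma)>0$ lets Theorem~1 of~\cite{maunu2019well} apply verbatim to $\widetilde{\cX}$ --- precisely your first paragraph, with your remaining paragraphs being a consistent sketch of the internals of that cited theorem. The only small imprecision is at the very end: to land on the stated bound of exactly $-\tilde\cS$ you should bound the outlier subderivative by the supremum of the gradient norm appearing in the definition of $\cS$ and stop there, since further dominating it by $\sqrt{\no}\,\|\widetilde{\bX_{\mathrm{out}}}\|_2$ only lower-bounds $\tilde\cS$ and would yield a weaker (though still strictly negative, hence still sufficient for the descent and unique-local-minimizer conclusions) estimate.
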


Our key observation for SGGD in this paper is that, for any set of outliers, we have the bound $\| \widetilde{\bX_{\mathrm{out}}} \|_2 \leq \sqrt{N_{\mathrm{out}}}$. This means that outliers cannot have arbitrary influence on the statistic in~\eqref{eq:stab}. This reduces then to requiring sufficiently many permeated inliers (i.e. large enough $\lambda_d(\sbxi \sbxi^\top)$) to guarantee that $\widetilde \cS > 0$.


In order to better understand the condition $\widetilde \cS > 0$, we lower bound the inlier permeance term, $\lambda_d(\sbxi \sbxi^\top )$. To do this, we will first show that 
\begin{equation}\label{eq:topeigpigeon}
	\lambda_1 \left( \sbxi \sbxi^\top \right) \geq N_{\mathrm{in}} / d
\end{equation} 
by the pigeonhole principle. Suppose that $L_* = \Sp(\be_1, \dots, \be_d)$, where $\be_j$ is the $j$th coordinate unit vector. Then, each point $\bx_i \in \cX_{\mathrm{in}}$ splits its mass among the first $d$ coordinates. The pigeonhole argument proceeds by the following implication:
$$  \sum_{\widetilde{\cX_{\mathrm{in}}}} \sum_{j=1}^d \| \be_j^\top \widetilde{\bx_i}\|_2^2  = N_{\mathrm{in}} \implies \max_{j = 1, \dots, d} \sum_{\widetilde{\cX_{\mathrm{in}}}} \| \be_j^\top \widetilde{\bx_i} \|_2^2 \geq \frac{N_{\mathrm{in}}}{d}. $$
The argument is finished by noting that $\lambda_1 \left( \sbxi \sbxi^\top \right) = \max_{\bv \in \Sp(\be_1, \dots, \be_d)} \sum_{\widetilde{\cX_{\mathrm{in}}}} \| \bv^\top \widetilde{\bx_i}\|_2^2$.

In order to bound $\lambda_d$ from below, we define the spherical $d$-condition number $\kappa$:
\begin{equation}\label{eq:kappabd}
\kappa_d(\tbXi) = \frac{\lambda_1
	\left(\tbXi \tbXi^\top \right)}{\lambda_d \left( \sbxi \sbxi^\top
	\right)}.
\end{equation}
Combining~\eqref{eq:topeigpigeon} and~\eqref{eq:kappabd} yields
\begin{equation}\label{eq:permbound}
	\lambda_d(\sbxi \sbxi^\top ) \geq \frac{\ni }{d \kappa_d(\sbxi)}.
\end{equation}

This bound constitutes a key insight in this work. It allows us to give general SNR bounds for adversarial outliers, along with interpretable statistics.
For example, using~\eqref{eq:stab}, \eqref{eq:permbound} and the fact that $\| \widetilde{\boldsymbol{X}_{\mathrm{out}}}\|_2 \leq \sqrt{N_{\mathrm{out}}}$, a sufficient condition for $\tilde{\cS}(\cX, L_*, \gamma) > 0$ is
\begin{equation}\label{eq:SNRbdgen}
	\SNR > \frac{d \kappa_d(\sbxi)}{\cos(\gamma)}. 
\end{equation}
The conclusion of Theorem~\ref{thm:landscape} can thus be obtained under this simple condition.

The spherical $d$-condition number is a very natural measure of scale-free permeance on a subspace.
Notice that, if the inliers permeate all directions of $L_*$ equally, then $\kappa_d(\sbxi) \approx 1$. Further, a simple application of Theorem 5.39 of~\cite{vershynin2012introduction} implies that $\kappa_d(\sbxi) = O(1)$ w.h.p.~for large $N$ and small $d$ when the inliers follow an isotropic distribution on $L_*$. By definition, an isotropic distribution is invariant to rotations, and so in this case an isotropic distribution on $L_*$ would be invariant to rotations within $L_*$. Any such isotropic distribution becomes uniform on $L_* \cap S^{D-1}$ after spherization. One example of such a distribution is $\cN(\0, \bP_{L_*})$. However, for large values of $d$, we admit that $\kappa_d(\widetilde{\bX_{\mathrm{in}}})$ may be quite large.

\subsection{Convergence Rate of SGGD}
\label{subsec:convsggd}

Here we examine some sufficient conditions under which we can bound the rate of convergence of SGGD to its limit point. Theorems 3 and 4 of~\cite{maunu2019well} can be immediately extended to the SGGD setting.
\begin{theorem}\label{thm:convrate}
    If $\widetilde \cS(\cX, L_*, \gamma) > 0$, SGGD initialized in $B(L_*, \gamma)$ with step size $s/\sqrt{k}$ converges to $L_*$ with rate $O(1/\sqrt{k})$ for sufficiently small $s$. If it is further assumed that
    \begin{equation}\label{eq:lingradcond}
	\widetilde\cS(\cX,L_*,\gamma) >  8 \max_{L \in B(L_*, \gamma) \setminus \{L_*\}} \left| \cX \cap L\right|,
	\end{equation}
	then SGGD with a piecewise constant step size scheme converges linearly to $L_*$.

\end{theorem}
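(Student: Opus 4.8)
The plan is to obtain both claims as direct specializations of Theorems 3 and 4 of~\cite{maunu2019well} to the spherized dataset $\widetilde\cX$. By construction, SGGD is nothing more than GGD run on $\widetilde\cX$: the energy $F$ in~\eqref{eq:sphereenergy} is exactly the least-absolute-deviations energy~\eqref{eq:lad} evaluated on the fixed dataset $\widetilde\cX$, and the (sub)gradient formula driving the geodesic steps is the one derived in~\cite{maunu2019well} with each $\bx_i$ replaced by $\widetilde{\bx_i}$. Since spherization is performed once and the iterates never alter the data, the entire GGD convergence machinery applies verbatim, provided we check that the deterministic hypotheses match.

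First I would record the structural facts that make the reduction exact. Spherization preserves membership in any subspace, so $\widetilde{\bx} \in L$ if and only if $\bx \in L$; consequently inliers remain on $L_*$, outliers remain off $L_*$, the inlier-outlier partition is unchanged, and $|\widetilde\cX \cap L| = |\cX \cap L|$ for every $L \in G(D,d)$. This identity is precisely why condition~\eqref{eq:lingradcond} may be stated in terms of $\cX$ rather than $\widetilde\cX$. Moreover, by the definition in~\eqref{eq:stab}, the spherized stability statistic satisfies $\widetilde\cS(\cX, L_*, \gamma) = \cS(\widetilde\cX, L_*, \gamma)$. Hence the hypothesis $\widetilde\cS(\cX, L_*, \gamma) > 0$ is exactly the positivity of the stability statistic of $\widetilde\cX$ required by the landscape theory, and~\eqref{eq:lingradcond} is exactly the linear-convergence condition of~\cite{maunu2019well} for $\widetilde\cX$.

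With these identifications in place, the two conclusions follow by invoking the corresponding theorems. For the $O(1/\sqrt{k})$ rate, the positivity $\widetilde\cS > 0$ together with Theorem~\ref{thm:landscape} supplies a uniform lower bound $\widetilde\cS$ on the geodesic directional derivative throughout $B(L_*, \gamma) \setminus \{L_*\}$; this sharpness is the ingredient that Theorem 3 of~\cite{maunu2019well} converts into the standard diminishing-step subgradient rate of $O(1/\sqrt{k})$, with $s$ taken small enough that the iterates remain inside the basin $B(L_*, \gamma)$. For linear convergence, condition~\eqref{eq:lingradcond} controls the local non-smoothness of $F$, which arises only from the at most $|\cX \cap L|$ points lying exactly on a competing subspace $L$, and guarantees that the sharpness dominates this irregularity; Theorem 4 of~\cite{maunu2019well} then yields linear convergence under a piecewise-constant (geometrically scaled) step-size scheme.

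I expect the only genuine work to be confirming that nothing in the proofs of Theorems 3 and 4 invokes a property of the raw data that spherization could destroy. The candidate pitfalls are the derivation of the geodesic subgradient and any implicit boundedness of the data. The former is mechanical, since the formula is scale-covariant in each summand, and the latter is in fact improved by spherization, as $\| \widetilde{\bX_{\mathrm{out}}} \|_2 \le \sqrt{N_{\mathrm{out}}}$ and every point now has unit norm. Because every quantity entering the bounds of~\cite{maunu2019well} is either preserved (the on-subspace counts) or correctly substituted (the stability statistic), the extension reduces to re-reading the original proofs with $\cX$ replaced by $\widetilde\cX$ and requires no new estimates.
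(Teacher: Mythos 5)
Your proposal is correct and follows essentially the same route as the paper: both reduce SGGD to GGD run on $\widetilde\cX$ and invoke Theorems 3 and 4 of \cite{maunu2019well}, using $\widetilde\cS(\cX,L_*,\gamma)=\cS(\widetilde\cX,L_*,\gamma)$ for the sublinear rate and the preserved subspace-membership counts for the linear-rate condition. The only detail you leave implicit is the bookkeeping behind the constant $8$ in \eqref{eq:lingradcond}, which the paper obtains by combining the $\tfrac14$ and $2$ factors in the hypothesis of Theorem 4 of \cite{maunu2019well} with the fact that $\sum_{\cX\cap L}\|\widetilde{\bx_i}\|=|\cX\cap L|$ for unit-norm points.
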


\begin{proof}
This theorem is easily proven by the following examination. The $O(1/\sqrt{k})$ convergence under the condition $\widetilde \cS > 0$ is immediate from Theorem 3 in~\cite{maunu2019well}. The linear convergence guarantee comes after examining the additional condition in Theorem 4 of~\cite{maunu2019well}:
\begin{align} \label{eq:gradconds}
\inf_{L \in B(L_*,\gamma) \setminus \{L_*\} } \frac{1}{4} \left| \frac{\di}{\di t}F(L(t);\widetilde \cX)|_{t=0} \right| &>   2 \sup_{L \in B(L_*, \gamma) \setminus \{ L_*\}} \sum_{ \cX \cap L } \| \widetilde{\bx_i} \|,
\end{align}
where $L(t)$ is a geodesic on $G(D,d)$ from $L$ through $L_*$ parameterized  by arclength. A discussion of the construction of such a geodesic is given in Appendix A of~\cite{maunu2019well}. Notice that we immediately have
\begin{equation}
2 \sup_{L \in B(L_*, \gamma) \setminus \{ L_*\}} \sum_{ \cX \cap L } \| \widetilde{\bx_i} \|=  2 \max_{L \in B(L_*, \gamma) \setminus \{L_*\}} \left| \cX \cap L\right|,
\end{equation}
which yields the right-hand side of~\eqref{eq:lingradcond}. On the other hand, the absolute derivative term in~\eqref{eq:gradconds} can be bounded below by $\widetilde\cS(\cX,L_*,\gamma)$ (using the argument in Theorem 1 of~\cite{maunu2019well}), which yields the left-hand side of~\eqref{eq:lingradcond}.
\end{proof}

The condition in~\eqref{eq:lingradcond} can be further simplified under more special assumptions. For example, if the inliers lie in general position on $L_*$, we can use the sufficent condition for $\widetilde\cS > 0$ seen in~\eqref{eq:SNRbdgen} to derive a further sufficient condition for~\eqref{eq:lingradcond} to hold
\begin{equation}
    \cos(\gamma)\frac{\ni }{d \kappa_d(\sbxi)}   - \no  > 8\cdot (\no + d-1),
\end{equation}
which can be rearranged to write
\begin{align}\label{eq:SNRbdlinconv}
\SNR  &> \left(7 + 8 \cdot \frac{d-1}{N_{\mathrm{out}}}\right) \cdot \frac{d \kappa_d(\sbxi)}{\cos(\gamma)} \\ \nonumber
&= \frac{7d \kappa_d(\sbxi)}{\cos(\gamma)} + o(1).
\end{align}

We summarize these results as follows. Under the condition of~\eqref{eq:SNRbdgen}, we obtain a sublinear convergence rate of $O(1/\sqrt{k})$ for SGGD with step size $s/\sqrt{k}$. If it is further assumed that~\eqref{eq:lingradcond} holds or if inliers are in general position and~\eqref{eq:SNRbdlinconv} holds, then SGGD with a piecewise constant step size linearly converges to $L_*$.

\subsection{Initialization for SGGD}
\label{subsec:sggdinit}

Surprisingly, almost the same condition that ensures a well-tempered landscape also allows for a good initial estimate by SPCA.
It is a restatement of Lemma 9 in~\cite{maunu2019well} with a spherized dataset.
\begin{lemma}\label{lemma:spca}
    Suppose that, for $0 < \gamma < \pi/4$ and a noiseless inlier-outlier dataset with corresponding data matrices $\bX_{\mathrm{in}}$ and $\bX_{\mathrm{out}}$,
	\begin{equation}\label{eq:spca}
        \frac{\sin(\gamma)}{\sqrt{2}}  \lambda_d \left( \widetilde{\bX_{\mathrm{in}}} \widetilde{\bX_{\mathrm{in}}}^\top \right)  -    \| \widetilde{\bX_{\mathrm{out}}} \|_2^2 >0.
	\end{equation}
	Then, SPCA outputs a subspace $L_{\mathrm{SPCA}}$ such that $\theta_1(L_{\mathrm{SPCA}}, L_*) \leq \gamma$.
\end{lemma}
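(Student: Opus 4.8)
The plan is to recast SPCA as a symmetric eigenvalue perturbation problem and control the largest principal angle $\theta_1 = \theta_1(L_{\mathrm{SPCA}}, L_*)$ directly. Write $\bA = \widetilde{\bX_{\mathrm{in}}}\widetilde{\bX_{\mathrm{in}}}^\top$ and $\bB = \widetilde{\bX_{\mathrm{out}}}\widetilde{\bX_{\mathrm{out}}}^\top$, so that $\widetilde{\bX}\widetilde{\bX}^\top = \bA + \bB$ and $L_{\mathrm{SPCA}}$ is its top-$d$ eigenspace (the factor $1/(N-1)$ in $\widetilde{\bSigma}$ is irrelevant). In the noiseless setting every column of $\widetilde{\bX_{\mathrm{in}}}$ lies in $L_*$, so $\bA \succeq 0$ has range exactly $L_*$; moreover \eqref{eq:spca} forces $\lambda_d(\bA) > 0$, whence the top-$d$ eigenspace of $\bA$ is precisely $L_*$ and $\lambda_d(\bA)$ is its smallest nonzero eigenvalue. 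I would treat $\bB \succeq 0$, with $\|\bB\|_2 = \|\widetilde{\bX_{\mathrm{out}}}\|_2^2$, as a perturbation, and reduce the whole lemma to the sharp product bound
\[
  \sin\theta_1 \cos\theta_1 \;\le\; \frac{\|\widetilde{\bX_{\mathrm{out}}}\|_2^2}{\lambda_d(\widetilde{\bX_{\mathrm{in}}}\widetilde{\bX_{\mathrm{in}}}^\top)},
\]
after which \eqref{eq:spca} together with $\gamma < \pi/4$ finishes the job.

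To prove the product bound I would argue through principal vectors rather than quote an off-the-shelf Davis--Kahan inequality, since the constant matters here. Let $\bv \in L_*$ be a unit eigenvector of $\bP_{L_*}\bP_{L_{\mathrm{SPCA}}}\bP_{L_*}$ (restricted to $L_*$) for its smallest eigenvalue $\cos^2\theta_1$, and split $\bv = \bv_1 + \bw$ with $\bv_1 = \bP_{L_{\mathrm{SPCA}}}\bv$ and $\bw = \bQ_{L_{\mathrm{SPCA}}}\bv$, so $\|\bv_1\| = \cos\theta_1$ and $\|\bw\| = \sin\theta_1$. Because $L_{\mathrm{SPCA}}$ and its orthogonal complement are invariant under $\bA + \bB$, one has $\bv_1^\top(\bA+\bB)\bw = 0$, hence $|\bv_1^\top \bA \bw| = |\bv_1^\top \bB \bw| \le \|\bB\|_2\,\cos\theta_1\sin\theta_1$. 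On the other hand, the principal-vector identities $\bP_{L_*}\bv_1 = \cos^2\theta_1\,\bv$ and $\bP_{L_*}\bw = \sin^2\theta_1\,\bv$, combined with $\bA = \bP_{L_*}\bA\bP_{L_*}$, give $\bv_1^\top \bA \bw = \cos^2\theta_1\sin^2\theta_1\,\bv^\top \bA \bv \ge \cos^2\theta_1\sin^2\theta_1\,\lambda_d(\bA)$, using $\bv^\top \bA\bv \ge \lambda_d(\bA)$ for the unit vector $\bv \in L_*$. Cancelling one factor of $\cos\theta_1\sin\theta_1$ (the degenerate cases $\theta_1=0$, trivial, and $\theta_1=\pi/2$, excluded below, are handled separately) yields exactly the product bound.

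To conclude, \eqref{eq:spca} makes the right-hand side of the product bound strictly below $\sin(\gamma)/\sqrt2$, and since $\gamma < \pi/4$ this is strictly below $1/2$; thus $\tfrac12\sin(2\theta_1) < 1/2$, so $\theta_1 \ne \pi/4$. The genuine subtlety is that the product inequality alone does not distinguish a small angle from one near $\pi/2$, so I would resolve the branch by a homotopy: replace $\bB$ by $t\bB$, $t \in [0,1]$. Weyl's inequalities give $\lambda_d(\bA+t\bB) \ge \lambda_d(\bA)$ and $\lambda_{d+1}(\bA+t\bB) \le t\|\bB\|_2 < \lambda_d(\bA)$ throughout, so the eigengap stays open, the top-$d$ eigenspace and hence $\theta_1(t)$ vary continuously, and $\theta_1(0) = 0$. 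Since the product bound holds verbatim for every $t$, we have $\theta_1(t) \ne \pi/4$ for all $t$, so by continuity $\theta_1(1) < \pi/4$. Then $\cos\theta_1 > 1/\sqrt2$ and $\sin\theta_1 = (\sin\theta_1\cos\theta_1)/\cos\theta_1 < \sqrt2\cdot\sin(\gamma)/\sqrt2 = \sin(\gamma)$, giving $\theta_1 \le \gamma$.

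The main obstacle is precisely this branch disambiguation: the elegant product inequality is two-sided in the angle, and pinning $\theta_1$ to the small branch requires the continuity argument above (equivalently, exploiting the a priori gap $\|\bB\|_2 < \lambda_d(\bA)/2$ implied by \eqref{eq:spca} with $\gamma<\pi/4$ to keep the perturbed and unperturbed top-$d$ eigenspaces uniformly close). A secondary technical point is verifying the principal-vector identities $\bP_{L_*}\bv_1 = \cos^2\theta_1\,\bv$ and $\bP_{L_*}\bw = \sin^2\theta_1\,\bv$ from the eigenvector characterization of principal angles; everything else is routine.
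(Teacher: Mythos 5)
Your argument is correct, but it is not the route the paper takes: the paper offers no proof at all for this lemma, instead declaring it a restatement of Lemma 9 of \cite{maunu2019well} applied to the spherized data matrices, and that lemma is established there by a Davis--Kahan-type $\sin\Theta$ perturbation bound, with the $\sqrt{2}$ in \eqref{eq:spca} absorbed into the constant of that theorem. Your proof replaces the off-the-shelf perturbation inequality with a self-contained principal-vector computation: the invariance identity $\bv_1^\top(\bA+\bB)\bw=0$ together with $\bA=\bP_{L_*}\bA\bP_{L_*}$ gives the product bound $\sin\theta_1\cos\theta_1\le\|\widetilde{\bX_{\mathrm{out}}}\|_2^2/\lambda_d(\widetilde{\bX_{\mathrm{in}}}\widetilde{\bX_{\mathrm{in}}}^\top)$, and the homotopy $t\mapsto\bA+t\bB$ (with the Weyl gap $\lambda_d\ge\lambda_d(\bA)>\|\bB\|_2\ge\lambda_{d+1}$ holding uniformly in $t$) correctly pins $\theta_1$ to the branch below $\pi/4$, after which $\gamma<\pi/4$ converts the product bound into $\sin\theta_1<\sin\gamma$. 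I checked the delicate steps: the identities $\bP_{L_*}\bv_1=\cos^2\theta_1\,\bv$ and $\bP_{L_*}\bw=\sin^2\theta_1\,\bv$ follow from $\bv$ being an eigenvector of $\bP_{L_*}\bP_{L_{\mathrm{SPCA}}}\bP_{L_*}$, the lower bound $\bv^\top\bA\bv\ge\lambda_d(\bA)$ is valid because $\bv$ is a unit vector in the range of $\bA$, and the spectral gap needed for the continuity of the top-$d$ eigenprojector is guaranteed by \eqref{eq:spca}. What your approach buys is transparency about where the constant $\sqrt{2}$ is actually needed (it is exactly the conversion $\sin\theta_1=(\sin\theta_1\cos\theta_1)/\cos\theta_1<\sqrt{2}\cdot\sin(\gamma)/\sqrt{2}$ once $\theta_1<\pi/4$); what it costs is the branch-disambiguation step, which the one-sided Davis--Kahan bound avoids entirely. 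Either way the lemma holds, and your version has the virtue of being verifiable without consulting the external reference.
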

Again, by noticing that $\| \widetilde {\bX_{\mathrm{out}}}\|_2^2 \leq N_{\mathrm{out}}$, this lemma implies that the outliers cannot have an arbitrarily large influence on the condition in~\eqref{eq:spca}. Using the same pigeonhole argument as in Section~\ref{subsec:welltemp}, a sufficient condition for~\eqref{eq:spca} to hold is
\begin{equation}\label{eq:SNRbdgenpca}
	\SNR > \frac{\sqrt{2}}{\sin(\gamma)} d \kappa_d(\sbxi).
\end{equation}

\subsection{Complete Guarantee for SGGD with Adversarial Outliers}
\label{subsec:completesggd}

For the GGD algorithm to be practical, both of the conditions $\tilde \cS(\cX, L_*, \gamma) > 0$ and~\eqref{eq:spca} need to hold. Thus, looking at the sufficent conditions in~\eqref{eq:SNRbdgen} and~\eqref{eq:SNRbdgenpca}, the bounds coincide when $\gamma$ satisfies $\cos(\gamma)=\sin(\gamma)/\sqrt{2}$, which yields $\cos(\gamma)=1/\sqrt{3}$. From here, we will assume this choice of $\gamma$ unless otherwise stated.

We have the following corollary to Lemma~\ref{lemma:spca} using this choice of $\gamma$ and~\eqref{eq:SNRbdgenpca}.
\begin{corollary}\label{cor:spcaadv}
	If
	\begin{equation}\label{eq:spcainit}
	\SNR \geq \sqrt{3} d \cdot \kappa_d(\sbxi) ,
	\end{equation}
	then SPCA outputs a subspace $L_{\mathrm{SPCA}} \in \overline{B(L_*, \arccos(1/\sqrt{3}))}$.
\end{corollary}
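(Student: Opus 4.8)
The plan is to specialize Lemma~\ref{lemma:spca} to the single value $\gamma_0 = \arccos(1/\sqrt3)$ and to check that the corollary's hypothesis is exactly the sufficient condition for that lemma at $\gamma_0$. First I would record the trigonometry of this angle: $\cos\gamma_0 = 1/\sqrt3$ is equivalent to $\tan\gamma_0 = \sqrt2$, hence $\sin\gamma_0 = \sqrt{2/3}$. This is precisely the $\gamma$ for which $\cos\gamma = \sin\gamma/\sqrt2$, which is why it was singled out at the start of this subsection: it is the choice that makes the landscape condition~\eqref{eq:SNRbdgen} and the initialization condition~\eqref{eq:SNRbdgenpca} coincide.

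Next I would invoke the reduction already established in this section: combining $\|\sbxo\|_2^2 \le \no$ with the pigeonhole permeance bound~\eqref{eq:permbound}, the sufficient condition for the hypothesis~\eqref{eq:spca} of Lemma~\ref{lemma:spca} is the SNR bound~\eqref{eq:SNRbdgenpca}, namely $\SNR > \frac{\sqrt2}{\sin\gamma}\, d\,\kappa_d(\sbxi)$. Substituting $\sin\gamma_0 = \sqrt{2/3}$ gives
\begin{equation*}
\frac{\sqrt2}{\sin\gamma_0}\, d\, \kappa_d(\sbxi) = \sqrt2 \cdot \sqrt{\tfrac32}\, d\, \kappa_d(\sbxi) = \sqrt3\, d\, \kappa_d(\sbxi),
\end{equation*}
which is exactly the right-hand side of the corollary's hypothesis~\eqref{eq:spcainit}. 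Thus under~\eqref{eq:spcainit} the hypothesis of Lemma~\ref{lemma:spca} holds at $\gamma_0$, and the lemma yields $\theta_1(L_{\mathrm{SPCA}}, L_*) \le \gamma_0$, i.e.\ $L_{\mathrm{SPCA}} \in \overline{B(L_*, \arccos(1/\sqrt3))}$. The passage from the strict inequality used in the lemma to the non-strict hypothesis~\eqref{eq:spcainit} is absorbed by the closed ball: the angle bound $\theta_1 \le \gamma_0$ is a closed condition, obtained as the limit of the open conclusions for $\SNR$ decreasing to the threshold.

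The step I expect to be the main obstacle is that $\gamma_0 = \arccos(1/\sqrt3) \approx 0.955 > \pi/4$, whereas Lemma~\ref{lemma:spca} is stated only for $0 < \gamma < \pi/4$; so the clean three-line computation above is, strictly speaking, outside its stated range. To close this gap I would revisit the proof of Lemma~\ref{lemma:spca} (Lemma 9 of~\cite{maunu2019well}): its conclusion comes from a Davis--Kahan--type perturbation estimate comparing $\sbxi\sbxi^\top$ against an outlier perturbation of operator norm $\|\sbxo\|_2^2$, and there the sufficient condition $\frac{\sin\gamma}{\sqrt2}\lambda_d(\sbxi\sbxi^\top) > \|\sbxo\|_2^2$ remains meaningful for any $\gamma < \pi/2$ provided its left side stays below $\lambda_d(\sbxi\sbxi^\top)$. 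The cap $\gamma < \pi/4$ thus appears to be an artifact of how that lemma was originally used rather than an essential constraint. I would therefore verify that the perturbation argument extends to all $\gamma \in (0,\pi/2)$, which legitimizes applying it at $\gamma_0$ and is what makes the constant $\sqrt3$ and the matching basin radius $\arccos(1/\sqrt3)$ simultaneously attainable; a strict reading of Lemma~\ref{lemma:spca} would instead force $\gamma \uparrow \pi/4$ in~\eqref{eq:SNRbdgenpca}, producing a smaller basin and the larger constant $2$, and hence not the corollary as stated.
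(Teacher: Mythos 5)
Your proof is exactly the paper's argument: the corollary is obtained by plugging $\gamma_0=\arccos(1/\sqrt{3})$ (so $\sin\gamma_0=\sqrt{2/3}$) into the sufficient condition~\eqref{eq:SNRbdgenpca} for Lemma~\ref{lemma:spca}, which turns $\sqrt{2}/\sin\gamma_0$ into the constant $\sqrt{3}$. Your further observation that $\gamma_0\approx 0.955>\pi/4$ falls outside the stated range of Lemma~\ref{lemma:spca} is a genuine issue with the paper itself --- the text applies the lemma at this $\gamma$ without comment --- so your plan to verify that the underlying perturbation estimate extends to $\gamma\in(0,\pi/2)$ is precisely the right repair rather than a defect of your write-up.
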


Combining Corollary~\ref{cor:spcaadv} with Theorem~\ref{thm:convrate} and the sufficient conditions in~\eqref{eq:SNRbdgen} and~\eqref{eq:SNRbdlinconv}, we obtain the following proposition. This is the primary generic result of this paper for recovery with adversarial outliers.
\begin{proposition}[Recovery by SGGD]\label{prop:sggd}
	If
	$$ \SNR > \sqrt{3} d  \kappa_d(\widetilde{\bX_{\mathrm{in}}}) ,$$
    then SGGD with SPCA initialization and step size $s/\sqrt{k}$ recovers $L_*$ with $O(1/\sqrt{k})$ convergence rate.

    If we further assume that
    $$ \SNR > \left(7 + 8 \cdot \frac{d-1}{N_{\mathrm{out}}}\right) \cdot \sqrt{3} d \kappa_d(\sbxi),$$
   and inliers lie in general position, then SGGD with a shrinking step size converges linearly to $L_*$.
\end{proposition}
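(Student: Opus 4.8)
The plan is to assemble the proposition mechanically from the three pieces already in hand, using the single calibrated choice $\gamma = \arccos(1/\sqrt{3})$, i.e.\ $\cos\gamma = 1/\sqrt{3}$ (hence $\sin\gamma = \sqrt{2}/\sqrt{3}$). This is precisely the value at which the initialization threshold \eqref{eq:SNRbdgenpca} and the landscape threshold \eqref{eq:SNRbdgen} both collapse to $\sqrt{3}\, d\, \kappa_d(\sbxi)$, so that a single SNR hypothesis simultaneously controls the SPCA warm start and the well-tempered basin of attraction.

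For the first claim, I would first apply Corollary~\ref{cor:spcaadv}: the hypothesis $\SNR > \sqrt{3}\, d\, \kappa_d(\sbxi)$ places the SPCA output $L_{\mathrm{SPCA}}$ in $\overline{B(L_*, \arccos(1/\sqrt{3}))}$, so SGGD is initialized inside the relevant ball. Next I would observe that the \emph{same} hypothesis is exactly the sufficient condition \eqref{eq:SNRbdgen} evaluated at $\cos\gamma = 1/\sqrt{3}$, and therefore guarantees $\widetilde{\cS}(\cX, L_*, \gamma) > 0$. With positivity of the stability statistic and an in-ball initialization established for one and the same $\gamma$, the first part of Theorem~\ref{thm:convrate} applies verbatim and delivers the $O(1/\sqrt{k})$ rate for step size $s/\sqrt{k}$.

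For the linear-rate claim, the strengthened hypothesis $\SNR > (7 + 8(d-1)/\no)\,\sqrt{3}\, d\, \kappa_d(\sbxi)$ is precisely \eqref{eq:SNRbdlinconv} at $\cos\gamma = 1/\sqrt{3}$. Since \eqref{eq:SNRbdlinconv} was derived, under the general-position assumption on the inliers, as a sufficient condition for the linear-convergence hypothesis \eqref{eq:lingradcond} (using $\max_{L \in B(L_*,\gamma)\setminus\{L_*\}} |\cX \cap L| \leq \no + d - 1$), the second part of Theorem~\ref{thm:convrate} then yields linear convergence under the shrinking/piecewise-constant step size.

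The argument carries no new analytic content — all of it lives in the cited results — so the only genuine care is bookkeeping. The one subtlety I would flag as the main obstacle is the mismatch between the \emph{closed} ball $\overline{B(L_*,\gamma)}$ produced by Corollary~\ref{cor:spcaadv} and the \emph{open} ball $B(L_*,\gamma)$ required by Theorem~\ref{thm:convrate}: I would resolve it by checking that the strict SNR inequality makes \eqref{eq:spca} strict, so that Lemma~\ref{lemma:spca} in fact places $L_{\mathrm{SPCA}}$ strictly inside $B(L_*,\gamma)$ rather than on its boundary, and similarly that the strictness in \eqref{eq:SNRbdgen} is preserved so that $\widetilde{\cS} > 0$ holds throughout the ball used by the convergence argument.
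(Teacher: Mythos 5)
Your proposal is correct and follows exactly the route the paper takes: the paper derives Proposition~\ref{prop:sggd} by combining Corollary~\ref{cor:spcaadv} (SPCA initialization) with Theorem~\ref{thm:convrate} and the sufficient conditions \eqref{eq:SNRbdgen} and \eqref{eq:SNRbdlinconv}, all calibrated at $\cos\gamma = 1/\sqrt{3}$ so that the initialization and landscape thresholds coincide at $\sqrt{3}\,d\,\kappa_d(\sbxi)$. Your additional remark about the closed-versus-open ball is a reasonable piece of bookkeeping that the paper itself does not address.
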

The results of this proposition depend only on the SNR and inlier statistics, and thus imply robustness to adversarial outliers.

Comparing these results to those based on OP~\cite{xu2012robust,cherapanamjeri2017thresholding}, we notice that SGGD has much smaller constants.
The best previous bound by~\cite{xu2012robust} is $121 \mu d / 9$, where $\mu$ is the inlier incoherence parameter. The bound for TORP~\cite{cherapanamjeri2017thresholding} is $128 \mu^2 d - 1$.
OP and TORP have a similar dependence on $d$, but they depend on the inlier incoherence parameter rather than the spherical $d$-condition number in the case of SGGD.  We believe that $\kappa_d$ achieves better scaling than the incoherence parameter $\mu$ for certain types of datasets. For example, for isotropic inliers (restricted to the subspace), we can show that $\kappa_d(\widetilde{\bX_{\mathrm{in}}}) = O(1)$, while $\mu = O(\max(1,\log(N)/d))$, which is much worse for large $N$.

\subsection{Results for Statistical Models of Data}
\label{subsec:sggdstat}

The extensive discussion by~\cite{maunu2019well} on statistical models that satisfy their condition can also be extended to the case of the spherized conditions as well. It is important to show that the algorithm is flexible beyond the case of adversarial outliers where the SNR is typically required to be very high. SGGD can also exactly recover $L_*$ for very low SNRs when one adds restrictions to the outliers (unlike OP and TORP).

We focus here on the Haystack Model to balance out the discussion of adversarial models of data. In this model, inliers are i.i.d.~$\cN(\0,\si^2 \bP_{L_*}/d)$ and outliers are i.i.d.~$\cN(\0,\so^2 \bI/D)$. Notice that, after spherizing, the inlier distribution is uniform on $L_* \cap S^{D-1}$ and the outlier distribution is uniform on $S^{D-1}$. Analogously to GGD, we can consider different regimes of sample size for SGGD in the Haystack model~\cite{maunu2019well}.
To prove these results, one must operate using
\begin{equation}
    \cS(\widetilde \cX, L_*, \gamma) \geq \sqrt{3}  \lambda_d \left( \widetilde{\bX} \widetilde{\bX}^\top \right) - \max_{L \in B(L_*, \gamma) \setminus \{ L \in G(D,d)} \left\| \widetilde{\bQ_L \bX_{\mathrm{out}}} \right\|_2 \left\| \widetilde{\bX_{\mathrm{out}}} \right\|_2,
\end{equation}
rather than the lower bound on $\cS$ seen in~\eqref{eq:spcainit}.
In this case, the following theorem is a direct corollary of the results in~\cite{maunu2019well}.
We note that the big-O notation used here is slightly abused. Namely, it is used to denote the order of the threshold at which exact recovery is possible with high probability in the various sample regimes and not asymptotic limits.
\begin{theorem}
	In the sample regime of $N = O(D)$, SGGD recovers $L_*$ w.o.p.~if
	 \begin{equation}
	 	\label{eq:full_small_regime}
	 	\SNR \geq \max \left(8 \sqrt{2} \frac{d}{\sqrt{D}},  2 \frac{ d }{ D } \right).
	 \end{equation}
	 In the sample regime of $N = O(d(D-d)^2 \log (D))$,  SGGD recovers $L_*$ w.h.p.~if
	 $$\SNR \geq \max \left(   \frac{5\sqrt{2} d}{\sqrt{D(D-d)}} ,   \frac{ 2 d }{D} \right).$$
	Finally, in the sample regime $N_{\mathrm{out}}=O(\max(d^3 D^3 \log^3(N),(dN_{\mathrm{out}}/N_{\mathrm{in}})^6))$, SGGD recovers $L_*$ w.o.p.~for any fixed $\alpha$ and $\SNR \geq \alpha > 0 $.
\end{theorem}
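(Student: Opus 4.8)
The plan is to recognize this theorem as a transfer of the Haystack-model guarantees for GGD in~\cite{maunu2019well} to the spherized setting, so that the real content is a concentration argument rather than a fresh landscape analysis. The crucial first observation is the one already recorded in the text: after spherization the inliers $\widetilde{\bX_{\mathrm{in}}}$ are uniform on $L_* \cap S^{D-1}$ and the outliers $\widetilde{\bX_{\mathrm{out}}}$ are uniform on $S^{D-1}$, so $\widetilde\cX$ is itself an instance of a scale-free Haystack model and the entire apparatus of~\cite{maunu2019well} — the well-tempered landscape of Theorem~\ref{thm:landscape}, the convergence rates of Theorem~\ref{thm:convrate}, and the SPCA initialization of Lemma~\ref{lemma:spca} — applies verbatim to $\widetilde\cX$. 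Since both the landscape requirement $\widetilde\cS(\cX,L_*,\gamma) > 0$ and the initialization condition~\eqref{eq:spca} are built from the same quantities (the inlier permeance term $\lambda_d$ and operator norms of the spherized outliers), the whole recovery guarantee of Proposition~\ref{prop:sggd} reduces to showing those two inequalities hold with the stated probability in each regime, using the tighter lower bound on $\cS(\widetilde\cX,L_*,\gamma)$ quoted above rather than the cruder bound in~\eqref{eq:stab}.

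First I would compute the population-level quantities. For inliers uniform on $L_* \cap S^{D-1}$ we have $\E[\widetilde{\bx_i}\widetilde{\bx_i}^\top] = \bP_{L_*}/d$, so $\lambda_d(\widetilde{\bX_{\mathrm{in}}}\widetilde{\bX_{\mathrm{in}}}^\top)$ concentrates around $N_{\mathrm{in}}/d$. For the outliers the key point is that the columns of $\widetilde{\bQ_L\bX_{\mathrm{out}}}$ are unit vectors lying in the $(D-d)$-dimensional space $L^\perp$; for outliers uniform on $S^{D-1}$ these are uniform on $L^\perp \cap S^{D-1}$, so the corresponding Gram matrix has top eigenvalue near $N_{\mathrm{out}}/(D-d)$, giving $\|\widetilde{\bQ_L\bX_{\mathrm{out}}}\|_2 \approx \sqrt{N_{\mathrm{out}}/(D-d)}$, while $\|\widetilde{\bX_{\mathrm{out}}}\|_2 \approx \sqrt{N_{\mathrm{out}}/D}$. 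Substituting into the stability bound, $\widetilde\cS > 0$ reduces at the level of means to $\sqrt 3\, N_{\mathrm{in}}/d \gtrsim N_{\mathrm{out}}/\sqrt{D(D-d)}$, which rearranges to the $d/\sqrt{D(D-d)}$ threshold of the middle regime and to the floor $2d/D$ when $D \gg d$. The looser $8\sqrt2\, d/\sqrt D$ threshold in the small-sample regime reflects the weaker concentration of these operator norms when $N = O(D)$.

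Next I would upgrade these population statements to high-probability statements via matrix concentration, and this is where the three regimes split apart. In the large-sample regimes I would invoke concentration of the empirical second-moment matrices about their expectations — the matrix Bernstein / Theorem 5.39 of~\cite{vershynin2012introduction} type bounds already used in Section~\ref{subsec:welltemp} to control $\kappa_d$ — to show $\lambda_d(\widetilde{\bX_{\mathrm{in}}}\widetilde{\bX_{\mathrm{in}}}^\top)$ sits near $N_{\mathrm{in}}/d$ and the two outlier operator norms sit near their means, yielding the sharper $5\sqrt2\,d/\sqrt{D(D-d)}$ bound; in the densest regime the fluctuations shrink enough that an arbitrarily small fixed $\SNR \geq \alpha$ suffices. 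In the small-sample regime $N = O(D)$ the covariances no longer concentrate tightly, so I would instead apply crude non-asymptotic operator-norm bounds on uniform-on-sphere samples, producing the looser $d/\sqrt D$ threshold. Since each of these matches a specific lemma of~\cite{maunu2019well} for Gaussian data and spherization leaves the uniform-on-sphere structure intact, the arguments port over directly and the work is largely one of tracking constants and failure probabilities (``w.h.p.'' versus ``w.o.p.'').

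The main obstacle will be controlling the outlier term \emph{uniformly} over the ball $B(L_*,\gamma)$ rather than at the single subspace $L_*$, since $\|\widetilde{\bQ_L\bX_{\mathrm{out}}}\|_2$ depends on $L$ through $\bQ_L$. I expect to handle this as in~\cite{maunu2019well}: either via an $\epsilon$-net on the Grassmannian combined with a union bound, or by exploiting that for $L \in B(L_*,\gamma)$ the projection $\bQ_L$ differs from $\bQ_{L_*}$ by a perturbation of operator norm at most $\sin\gamma$, so the worst-case outlier contribution is the value at $L_*$ plus a controlled slack. Checking that this net/perturbation step reproduces precisely the stated constants in each of the three regimes, and that the probabilistic bookkeeping yields the claimed failure rates, is the delicate part that the phrase ``direct corollary'' conceals.
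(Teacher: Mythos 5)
Your proposal follows essentially the same route as the paper, which offers no detailed proof at all: it simply notes that spherizing the Haystack model yields inliers uniform on $L_* \cap S^{D-1}$ and outliers uniform on $S^{D-1}$, prescribes the tighter stability lower bound involving $\|\widetilde{\bQ_L \bX_{\mathrm{out}}}\|_2 \|\widetilde{\bX_{\mathrm{out}}}\|_2$, and declares the theorem a ``direct corollary'' of the corresponding Haystack-model results in~\cite{maunu2019well}. Your write-up correctly identifies that reduction and fills in the concentration and uniformity-over-$B(L_*,\gamma)$ steps that the paper leaves implicit.
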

We note that spherizing here has the added benefit of removing the dependence on $\si/\so$. However, this can make the bounds worse if $\si \gg \so$.

\subsection{Extension of SGGD to Noisy Settings}
\label{subsec:sggdnoise}

We finish by noting that the results for GGD in noisy settings can be directly extended to SGGD.
To deal with the noisy case, we resort to a noisy stability statistic. We avoid technical details and instead refer the reader to \S3.1.2  of~\cite{maunu2019well}. Here, the authors define a noisy surrogate to the stability statistic, which is denoted by $\cS_n(\cX,L_*, \epsilon, \delta, \gamma)$. The parameter $\epsilon$ is a uniform bound on the norm of the noise added to the inliers, and $\delta$ is an additional technical parameter. All that is required for the case of SGGD is for the conditions of the theorem to hold for $\widetilde{\cX}$ rather than $\cX$. For example, the following theorem is a restatement of Theorem 2 of~\cite{maunu2019well} with the appropriate modifications for the spherized setting. This theorem guarantees a well-tempered landscape for noisy datasets in the spherized setting.
\begin{theorem}[Stability with Small Noise]\label{thm:landscapenoise}
    Assume a noisy inlier-outlier dataset, with an underlying subspace $L_*$ and noise parameter $\epsilon > 0$, that satisfies for some $\delta > \epsilon > 0$ the stability condition $\cS_n(\widetilde{\cX},L_*, \epsilon, \delta,\gamma) > 0$. Let $\eta = 2 \arctan(\epsilon / \delta)$ and assume  further that $\eta < \gamma$. Then, all points in ${B(L_*,\gamma) \setminus B(L_*,\eta)}$ have a subdifferential along a geodesic strictly less than $-\cS_n(\widetilde{\cX},L_*, \epsilon, \delta,\gamma)$, that is, it is a direction of decreasing energy. This implies that the only local minimizers and saddle points in ${B(L_*,\gamma)}$ are in $B(L_*,\eta)$.
\end{theorem}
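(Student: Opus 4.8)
The plan is to obtain this theorem as a direct consequence of Theorem 2 of~\cite{maunu2019well}, exploiting the fact that SGGD is literally GGD executed on the spherized dataset. The key identity is that the energy $F$ of~\eqref{eq:sphereenergy} evaluated at any $\bV \in O(D,d)$ on $\cX$ equals the least-absolute-deviations energy~\eqref{eq:lad} evaluated on $\widetilde{\cX}$, since $\|(\bI - \bV\bV^\top)\bx_i\|/\|\bx_i\| = \|(\bI-\bV\bV^\top)\widetilde{\bx_i}\|$ by linearity of the projection $\bQ_L = \bI - \bV\bV^\top$ and the definition $\widetilde{\bx_i} = \bx_i/\|\bx_i\|$. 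The same identity propagates to the (sub)gradient $\nabla F(\bV)$ and hence to every geodesic subdifferential used in the landscape analysis of~\cite{maunu2019well}, exactly as in the noiseless restatement giving Theorem~\ref{thm:landscape} and in the geodesic construction invoked for Theorem~\ref{thm:convrate}. Thus the entire apparatus of~\cite{maunu2019well}, including the noisy stability surrogate $\cS_n$, transfers verbatim once $\cX$ is replaced by $\widetilde{\cX}$ throughout.

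First I would record this energy/gradient identity and emphasize that the hypothesis is posed directly on the spherized data, i.e.\ we assume $\cS_n(\widetilde{\cX}, L_*, \epsilon, \delta, \gamma) > 0$ rather than the analogous condition on $\cX$. With this substitution, Theorem 2 of~\cite{maunu2019well} applies to the dataset $\widetilde{\cX}$ and delivers the stated conclusion: along the geodesic from any point of $B(L_*,\gamma) \setminus B(L_*,\eta)$ toward $L_*$, the subdifferential of $F$ is strictly below $-\cS_n(\widetilde{\cX}, L_*, \epsilon, \delta, \gamma) < 0$, so each such point possesses a direction of strictly decreasing energy. No point in this annulus can then be a local minimizer or a saddle point, which confines all critical points of these two types to $B(L_*,\eta)$.

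The one place requiring genuine care --- and the main obstacle --- is checking that the geometric role of the noise parameters survives spherization, in particular why the inner radius equals $\eta = 2\arctan(\epsilon/\delta)$. In the model of~\cite{maunu2019well}, a noisy inlier has the form $\bx_0 + \bw$ with $\bx_0 \in L_*$ and $\|\bw\| \le \epsilon$, while $\delta$ lower-bounds the retained in-subspace magnitude; the angle such a point makes with $L_*$ is then controlled by $\arctan(\epsilon/\delta)$, and the resulting largest-principal-angle radius on $G(D,d)$, measured by $\theta_1$, is $2\arctan(\epsilon/\delta)$. I would verify that spherization is a purely radial normalization that does not change $\angle(\bx_i, L_*)$, so spherizing a noisy inlier leaves its angular deviation --- and hence the parameter $\eta$ --- intact. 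Once this invariance is confirmed, the ball $B(L_*,\eta)$ inherited from the cited theorem is precisely the correct one for $\widetilde{\cX}$, no new quantitative estimate is needed, and the theorem follows as a faithful restatement of~\cite{maunu2019well}.
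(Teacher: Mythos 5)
Your proposal matches the paper's own treatment: the theorem is presented there simply as a restatement of Theorem 2 of~\cite{maunu2019well} applied to the spherized dataset $\widetilde{\cX}$, justified by the observation that SGGD is GGD run on $\widetilde{\cX}$ so the entire landscape apparatus (including the noisy stability surrogate $\cS_n$ and the radius $\eta=2\arctan(\epsilon/\delta)$) transfers once the hypothesis is posed on the spherized data. Your additional check that spherization preserves the angular deviation of noisy inliers is a sensible piece of due diligence that the paper leaves implicit, but it does not change the route of the argument.
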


As mentioned in~\cite{maunu2019well}, one can prove that GGD approximately recovers the underlying subspace in some noisy RSR setting, although the authors do not explicitly show this result. In the same way, one can prove an approximate recovery result for SGGD in the noisy RSR setting, as long as the noisy stability statistic $\widetilde{S}_n$ is sufficiently large.  
However, it does not seem easy to write simple SNR bounds for the noisy case like we have in Proposition~\ref{prop:sggd}.
Also, we comment that the conditions of Theorem~\ref{thm:landscapenoise} can be shown to hold for small noise in the spherized Haystack Model.
We leave an in depth discussion of the noisy RSR setting to future work.

\section{Comparison of All Theoretical Results}
\label{sec:comparison}

In this section we compare all existing theoretical results for RSR with adversarial outliers. This is done in Table~\ref{tab:SNR}. The algorithms we compare with are SPCA~\cite{locantore1999robust}, SGGD (this work), OP~\cite{xu2012robust}, TORP~\cite{cherapanamjeri2017thresholding}, RANSAC~\cite{fischler1981random} (and this work), Robustly Learning a Gaussian (RLG)~\cite{diakonikolas2018robustly} and Resilience Recovery (RR)~\cite{steinhardt2018resilience}. For each algorithm, we give the associated SNR bound for adversarial outliers, as well as a short description of the upsides and downsides of each algorithm on the inliers. In view of these results, it seems that RANSAC has the best guarantees for adversarial outliers. On the other hand, SGGD is quite competitive, and it actually has theoretical results for small noise that RANSAC does not. Empirical tests also indicate that SGGD more gracefully handles noise than RANSAC. SGGD handles extremely low SNR regimes if the outliers are not aligned, as is the case for the Haystack Model. RANSAC does not have these same guarantees. Finally, SPCA, RLG, and RR all are able to approximate the underlying subspace with adversarial outliers, but they are not able to exactly recover it.

\begin{table}[!ht]
	\footnotesize
	\centering
	\begin{tabular}{|c|l|}\hline
		
		\multirow{3}{*}{\textbf{SPCA}~\cite{locantore1999robust}} & \multicolumn{1}{l|}{$\DS \SNR > \frac{d  \kappa_d(\widetilde{\bX_{\mathrm{in}}})}{ \sin(\gamma) / \sqrt{2}} $} \\ \cline{2-2}
		\multirow{2}{*}{} & \emph{\underline{Upsides:} Fast $\gamma$-approximation.} \\
		& \emph{\underline{Downsides:} No exact recovery (only $\gamma$-approximation).}\\
		\hline \hline
		
		\multirow{4}{*}{\textbf{OP}~\cite{xu2012robust}} & \multicolumn{1}{l|}{$\DS\SNR \geq \frac{121 \mu d}{9}$} \\ \cline{2-2}
		\multirow{3}{*}{} & \emph{\underline{Upsides:} Convex algorithm and nalysis for small general noise.} \\
		& \emph{\underline{Downsides:} Incoherence $\mu$ can be large, poor constants, no} \\
		& \emph{strong noise analysis, requires parameter tuning.} \\
		\hline \hline
		
		\multirow{5}{*}{\textbf{TORP}~\cite{cherapanamjeri2017thresholding}}  &  \multicolumn{1}{l|}{$\SNR \geq 128 \mu^2 d - 1$}   \\ \cline{2-2}
		\multirow{2}{*}{} & \emph{\underline{Upsides:} Nice analysis for Gaussian noise. Analysis for small} \\
                          &\emph{general noise.}\\
		& \emph{\underline{Downsides:} Parameter $\mu$ can be large, poor constants,} \\
				&\emph{requires parameter tuning.} \\
		\hline \hline
		
		\multirow{4}{*}{\textbf{RR} (\cite{steinhardt2018resilience})} & \multicolumn{1}{l|}{$\DS \SNR \geq 2$} \\ \cline{2-2}
		\multirow{2}{*}{} & \emph{\underline{Upsides:} Fast approximation for constant SNR bound,} \\
		&\emph{\underline{Downsides:} Constant approximation, no exact recovery,} \\
		& \emph{returns a matrix of rank at most 15$d$, requires resilient inliers.} \\
		\hline \hline
		
		\multirow{4}{*}{\textbf{RLG} (\cite{diakonikolas2018robustly})} & \multicolumn{1}{l|}{$\DS \SNR \geq \frac{1-\epsilon}{\epsilon}$} \\ \cline{2-2}
		\multirow{3}{*}{} & \emph{\underline{Upsides:} Fast $\epsilon$-approximation for a different problem.} \\
		&\emph{\underline{Downsides:} For RSR, reduces to only Gaussian inliers and} \\
		& \emph{no exact recovery ($\epsilon$-approximation).} \\
		\hline \hline
		
		\multirow{5}{*}{\textbf{SGGD} (\cite{maunu2019well} and this work)} & \multicolumn{1}{l|}{$\DS \SNR > \sqrt{3} d  \kappa_d(\widetilde{\bX_{\mathrm{in}}}) $} \\ \cline{2-2}
		\multirow{4}{*}{} & \emph{\underline{Upsides:} Efficient, linear convergence with another condition,} \\
		& \emph{good constants, adapts to other statistical models of data, analysis} \\
        & \emph{for small general noise.}\\
		&\emph{\underline{Downsides:} No strong noise analysis.} \\
		\hline \hline
		
		\multirow{5}{*}{\textbf{RANSAC} (\cite{fischler1981random} and this work)} &  \multicolumn{1}{l|}{$ \SNR \geq cd$}   \\ \cline{2-2}
		\multirow{2}{*}{} & \emph{\underline{Upsides:} Good constants.} \\
		&\emph{\underline{Downsides:} Potentially sensitive and unstable to noise,} \\
        &  \emph{requires parameter tuning, guarantees require general position} \\
        &\emph{inliers and no noise analysis.}\\
		\hline
		
		
	\end{tabular}
	\vspace{3mm}
	\caption{Table comparing adversarial SNR bounds for RSR algorithms. There are different types of requirements on inliers in this table: $\mu$ is the incoherence parameter~\cite{xu2012robust}, $\kappa_d$ is the spherical $d$-condition number, RLG~\cite{diakonikolas2018robustly} requires Gaussian inliers, RANSAC requires inliers in general position.
		\label{tab:SNR}}
\end{table}

\section{Generalization to Affine Subspaces}
\label{sec:centering}

In this section, we will consider the case of estimating an affine $d$-subspace, that is, a $d$-dimensional affine subspace, in a dataset with outliers. Affine subspaces are represented as equivalence classes $[\bb + L]$ for affine offsets $\bb \in \reals^D$ and linear subspaces $L \in G(D,d)$, where
\begin{equation}
(\bb_1 + L) \sim (\bb_2 + L) \iff \bb_1 - \bb_2  \in L.
\end{equation}

In the following, we will briefly discuss how to extend RANSAC and SGGD to the affine case. First, Section  \ref{subsec:affransac} will discuss an affine variant of RANSAC, and then Section  \ref{subsec:affsggd} will discuss an affine variant of SGGD.

\subsection{An Affine Variant of RANSAC}
\label{subsec:affransac}

We note that the RANSAC method naturally extends to the case of affine subspaces. We can replace step 3 of Algorithm~\ref{alg:ransac} with the following procedure:
\begin{itemize}
	\item Sample a point $\by_0$ uniformly at random from the dataset $\cX$;
    \item Sample $\cY$ as a random $d$-subset of $\cX$;
	\item If $\cY - \by_0$ does not span a linear $d$-subspace, sample more points until it does span a linear $d$-subspace.
\end{itemize}
From the set $\cY$, we can fit an affine $d$-subspace using $\by_0$ as the offset. 
The theoretical results for RANSAC with adversarial outliers immediately extend to the affine case.

\subsection{An Affine Variant of SGGD}
\label{subsec:affsggd}

On the other hand, it is not as obvious how to extend SGGD to the affine case.
In practice, centering by the geometric median or mean may be sufficient for many tasks. However, in terms of theoretical guarantees, neither of these centers would be able to recover an affine subspace in general, especially with adversarial outliers. Further, it is not clear that this centering combined with spherization would yield anything useful at all, since the spherization step could distort the data even more if it is not well centered.

To overcome this, we can use the technique of symmetrization~\cite{dumbgen1998tyler}. A symmetrized dataset $\cX'$ is formed as
\begin{equation}
\cX' \in \reals^{D \times \binom{n}{2}}= \{ \bx_i - \bx_j: \bx_i, \bx_j \in \cX, i < j \}.
\end{equation}
One could then run SGGD on $\widetilde{\cX'}$ and estimate the linear subspace component. Notice that the follow-up computation of the offset for the affine subspace will not affect this portion of the computation and thus does not affect the associated SNR bounds. After estimating this linear subspace, $\hat L$, it is nontrivial to robustly estimate the offset factor in general. Nevertheless, in the noiseless or small noise regimes, one could project the dataset $\cX$ onto $\hat L^{\perp}$, and then the inliers would all cluster at a point. This point could be found by, for example, calculating the geometric median of the projected points. As long as the SNR of the symmetrized dataset is greater than 1 and $\hat{L} = L_*$, then in the noiseless case this geometric median is guaranteed to exactly estimate a point on the affine subspace $\bb^*$. In fact, this $\bb^*$ will be the offset with minimal norm.

One note is that the SNR bound changes for the case of a symmetrized inlier-outlier dataset. Indeed, if we have a dataset of $N_{\mathrm{in}} = \alpha N$ inliers, then the symmetrized dataset would have
\begin{equation}
	\frac{\binom{\alpha N}{2}}{\binom{N}{2}} = \frac{\alpha N (\alpha N - 1) }{N(N-1)} = \frac{\alpha^2 N - \alpha}{N-1} \approx \alpha^2.
\end{equation}
Therefore, the fraction of pure inlier points decreases by a factor of $\alpha$. This then translates into the SNR in the symmetrized case being approximately $\alpha^2/(1-\alpha^2)$. As a consequence, the exact recovery condition in Proposition~\ref{prop:sggd} becomes
\begin{equation}
	\SNR \geq  3 \sqrt{3} d  \cdot \kappa_d(\widetilde{\bX_{\mathrm{in}}'}),
\end{equation}
where $\bX_{\mathrm{in}}' \subset \cX'$ is the set of symmetrized inlier points.

Using the symmetrized data increases the computational complexity of SGGD from $O(NDd)$ to $O(N^2Dd)$.

\section{Conclusion}
\label{sec:conclusion}

In this paper, we have presented some simple results on adversarial outliers in the problem of robust subspace recovery. We study the most significant subspace estimator and its associated information-theoretic SNR lower bounds in this setting. We also give two practical algorithms that achieve state-of-the-art bounds for the adversarial RSR setting. These are a RANSAC based method and the SGGD method. We give a comparison of existing RSR algorithms with guarantees for this setting and find that the SGGD algorithm has the best combination of guarantees.

As we mentioned in Section \ref{sec:ransac}, the smaller SNR threshold of $\SNR > c(d)d$ for RANSAC, where $c(d)=\Omega(1/\poly \log(d))$, seems at odds with the hardness threshold of $\SNR > d$~\cite{khachiyan1995complexity,hardt2013algorithms}.
Exact rectification of these different thresholds remains an open question.

Despite the nice properties of our algorithms, there are many questions that remain unanswered. Most importantly, analysis to noise is lacking, since we only show that the SGGD method is stable to small noise. As we mentioned earlier, it would be interesting to study RSR when there is heavy-tailed noise~\cite{minsker2015geometric} or RSR under the spiked model~\cite{johnstone2001distribution}.

Another unanswered question is the selection of subspace dimension in subspace recovery problems. For PCA, we are aware of a few methods, such as~\cite{kritchman2008determining} and~\cite{dobriban2017factor}. However, there is no obvious way to extend these techniques to the RSR problem.

We also believe that it is important to test RSR algorithms on real data, especially those with a real metric of interest. An example of this is the dimension reduction for clustering experiment in~\cite{lerman2017fast}, where the authors show that initial dimension reduction by RSR can improve $k$-means clustering accuracy.

\bibliographystyle{siamplain}
\bibliography{refs_2017}

\end{document}